\gdef\@copyrightpermission{
  \begin{minipage}{0.8\columnwidth}
   \href{https://creativecommons.org/licenses/by/4.0/}{This work is licensed under a Creative Commons Attribution International 4.0 License.}
  \end{minipage}
  \vspace{5pt}
}
\newtheorem{theorem}{Theorem}[section]
\newtheorem{definition}[theorem]{Definition}
\newtheorem{lemma}[theorem]{Lemma}
\newtheorem{remark}[theorem]{Remark}
\titlespacing*{\section}{0pt}{*1}{*1}
\renewcommand{\paragraph}[1]{\noindent {\bf #1}}
\begin{document}

\title{What's in a Query: Polarity-Aware Distribution-Based Fair Ranking}

\author{Aparna Balagopalan$^*$}
\email{aparnab@mit.edu}
\affiliation{\institution{Massachusetts Institute of Technology}
\country{Cambridge, USA}
}

\author{Kai Wang$^*$}
\email{kwang692@gatech.edu}
\affiliation{\institution{Georgia Institute of Technology}
\country{Atlanta, USA}
}

\author{Olawale Salaudeen}
\email{olawale@mit.edu}
\affiliation{\institution{Massachusetts Institute of Technology}
\country{Cambridge, USA}
}

\author{Asia Biega}
\email{asia.biega@mpi-sp.org}
\affiliation{
\institution{Max Planck Institute \\for Security and Privacy} 
\country{Bochum, Germany}
}

\author{Marzyeh Ghassemi}
\email{mghassem@mit.edu}
\affiliation{\institution{Massachusetts Institute of Technology}
\country{Cambridge, USA}
}

\renewcommand{\shortauthors}{Aparna Balagopalan, Kai Wang, Olawale Salaudeen, Asia Biega, \& Marzyeh Ghassemi}

\begin{abstract}
Machine learning-driven rankings, where individuals (or items) are ranked in response to a query, mediate search exposure or \emph{attention} in a variety of safety-critical settings. Thus, it is important to ensure that such rankings are fair.  Under the goal of equal opportunity, attention allocated to an individual on a ranking interface should be proportional to their relevance across search queries. In this work, we examine amortized fair ranking -- where relevance and attention are cumulated over a sequence of user queries to make fair ranking more feasible in practice. Unlike prior methods that operate on expected amortized attention for each individual, we define new divergence-based measures for attention distribution-based fairness in ranking (DistFaiR), characterizing unfairness as the divergence between the distribution of attention and relevance corresponding to an individual over time. This allows us to propose new definitions of unfairness, which are more reliable at test time. Second, we prove that group fairness is upper-bounded by individual fairness under this definition for a useful class of divergence measures, and experimentally show that maximizing individual fairness through an integer linear programming-based optimization is often beneficial to group fairness.
Lastly, we find that prior research in amortized fair ranking ignores critical information about queries, potentially leading to a \emph{fairwashing} risk in practice by making rankings appear more fair than they actually are.

\end{abstract}

\ccsdesc[300]{Human-centered computing~ranking, fairness}

\keywords{fair ranking; query polarity}
\maketitle

\begin{figure}[htb!]
\centering
\begin{minipage}[htb!]
{\linewidth}
\centering
\includegraphics[width=\linewidth,trim={0.1cm 5.5cm 0.1cm 5.5cm},clip]{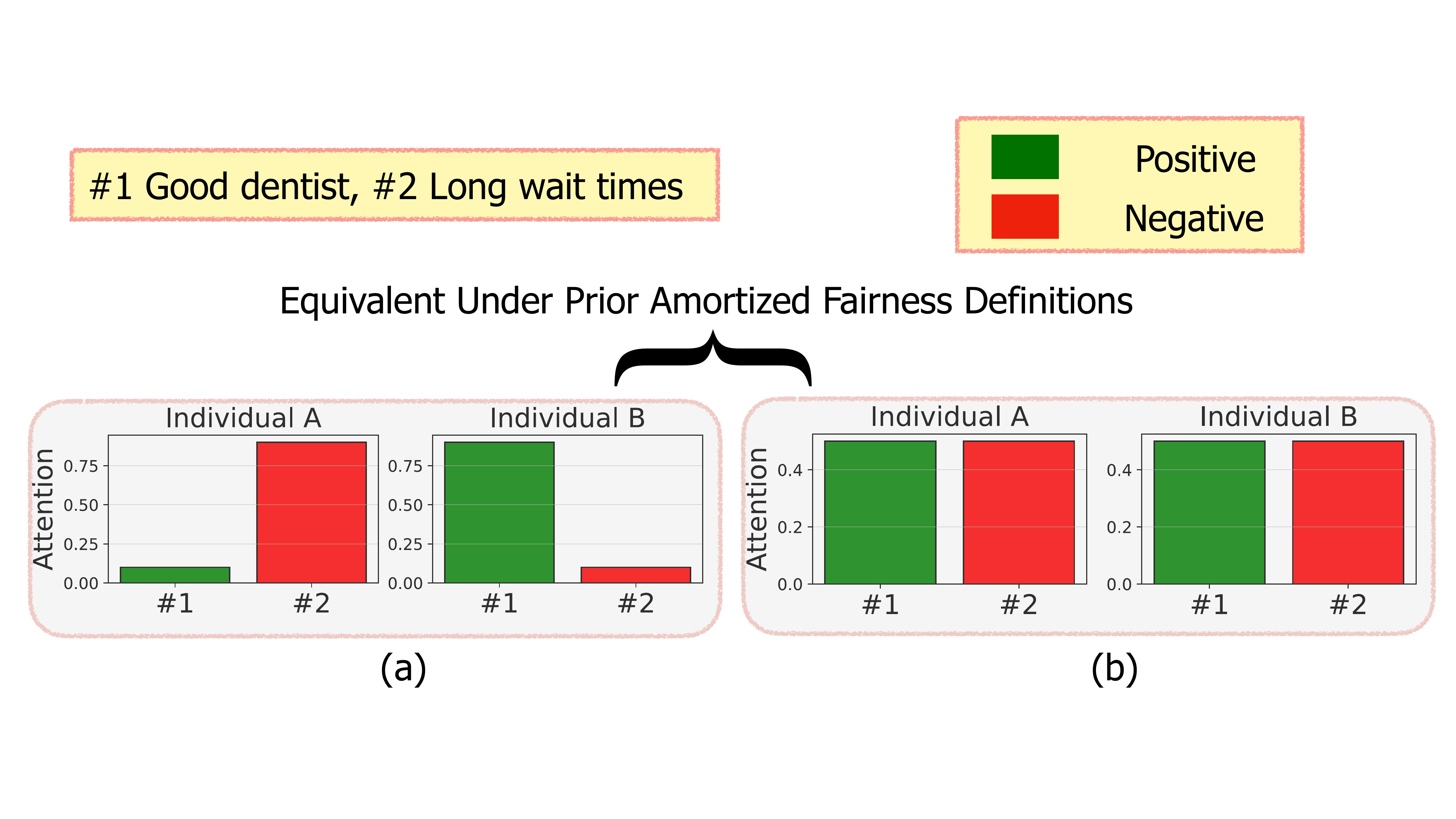}

\caption{\textbf{Past work in amortized fair ranking minimizes the differences between an individual's expected cumulative attention and relevance over queries, where queries are considered exchangeable.}  However, such formulations lack critical information about the distributions of attention and the properties of queries (e.g., polarity).  Our approach, DistFaiR, aims to overcome this.  The example here considers two search queries with opposite polarities. Both individuals are equally relevant, and have equal expected attention, but have different attention distributions in (a) and (b).}
\label{fig:need_for_distributions}
\end{minipage}
\end{figure}

\section{Introduction}
\label{submission}
Automated ranking systems are widely employed in several high-impact settings, such as ordering job candidates, guiding health-related decisions, and influencing purchasing decisions for safety-critical products~\cite{clarke2020overview,zeide2022silicon, zehlike2020reducing,hajian2016algorithmic}.
These systems directly influence access to critical resources, such as employment opportunities, healthcare, and safe consumer products, all of which significantly affect health and economic outcomes~\cite{saito2022fair,garcia2021maxmin,fabbri2020effect}, among others. However, prior work has shown that some automated rankings may be biased against some minoritized groups of individuals~\cite{biega2018equity,geyik2019fairness}: for example, women are less likely to occupy higher positions in rankings corresponding to searches made in some online hiring contexts~\cite{chen2018investigating}. The increased adoption of large language models (LLMs) as efficient text rankers~\cite{sanner2023large,zhuang2023open,hou2023large,gao2023chat} has the potential to increase the prevalence of automated rankings. However, this expanded usage risks amplifying existing biases in the distribution of user attention and economic opportunity. Mitigating such risks is a critical step towards building a responsible web-based system such as search engines whose performative power~\cite{hardt2022performative} to amplify bias has been demonstrated~\cite{mendler2024engine}.  \looseness=-1

One domain where automated ranking systems are ubiquitous is {\em search}~\cite{altman2005ranking}. Previous works have proposed several interventions and metrics~\cite{beutel2019fairness,naghiaei2022cpfair} to ensure that user attention in search is fairly distributed. In these frameworks, ranking algorithms are considered to be mediators of \emph{exposure} to searchers~\cite{joachims2021recommendations,schnabel2016recommendations}, where exposure is defined as the likelihood of \emph{visual attention} from searchers. A common intervention to achieve fair ranking is distributing rankings, and thereby attention, as a function of \emph{relevance}~\cite{singh2018fairness, biega2018equity}. 

Notably, fair exposure can be impossible in a single ranking where attention decays quicker than relevance, for instance, when all individuals have equal relevance and rankings have position bias w.r.t. attention (see Section~\ref{ref:section_ranking_def}). As a result, many proposed fairness interventions primarily focus on achieving fair exposure on the aggregate, i.e., over a sequence of queries (e.g., {``good dentist", ``good optometrists", ...})~\cite{biega2018equity,singh2018fairness}. That is, fairness is \emph{amortized} over a sequence, e.g., query $\#1$ and $\#2$ in Figure~\ref{fig:need_for_distributions}.

We identify two key limitations in current definitions of fair attention-based amortized ranking: (1) existing methods primarily focus on differences between the mean of attention and relevance distributions across queries, which fails to account for discrepancies in higher-order moments, such as variance or skewness, that may impact fairness (see Figure~\ref{fig:need_for_distributions} and Figure~\ref{fig:distribution_reliability} in Appendix for intuition), and (2) these methods assume that all attention is inherently positive, overlooking cases where unfairness arises due to disproportionate attention for negative or harmful queries compared to equally relevant counterparts, e.g., Figure~\ref{fig:fairwashing_amortized_ranking} in Appendix. 

Our approach, {\em distribution-aware fairness in ranking} (DistFaiR), overcomes these limitations. Our contributions are as follows:
\vspace{-1.05em}

\begin{itemize}
    \item We formalize a definition of amortized ranking fairness that accounts for differences (beyond means) in the distributions of cumulative relevance and cumulative exposure for individuals over a sequence of queries.
    \item We identify a set of measures that enable attention and relevance distribution-aware fairness in ranking (DistFaiR). We also consider a worst-case definition of fairness. %
    Specifically, we demonstrate theoretically and empirically that, for these measures, individual fairness upper bounds group fairness for the identified set of DistFaiR measures. Also, we show empirically that individual and group fairness are not always at odds, i.e., improving individual fairness often improves group fairness. 
    \item We demonstrate {\em fairwashing}, a phenomenon where a ranking appears to be more fair than it is when the polarities of queries are not accounted for. We propose polarity-dependent modifications to our newly proposed and existing fairness metrics that address this issue of fairwashing.
    
\end{itemize}

\vspace{-0.2em}

\section{Background and Related Work}
\vspace{-0.9em}
\paragraph{Fair Ranking Metrics and Interventions.} Rankings with high ranking quality may be unfair at a group or individual level~\cite{biega2018equity, diaz2020evaluating, singh2018fairness, morik2020controlling, bower2020individually,zehlike2021fairness,mehrotra2022fair}. Previous works have proposed various techniques to quantify and mitigate unfairness by allocating exposure or visual attention proportionally to relevance at a group or individual level~\cite{yang2023vertical,biega2018equity, diaz2020evaluating, singh2018fairness, morik2020controlling, bower2020individually,zehlike2021fairness,heuss2022fairness}. Some of these are in-processing (i.e., during ranking generation )~\cite{singh2019policy,bower2020individually,xu2024fairsync}, while some are post-processing~\cite{biega2018equity,sarvi2022understanding} interventions.  In some cases, relevance scores used to produce the ranking are jointly estimated along with fairness optimization~\cite{morik2020controlling,singh2019policy,bower2020individually}. Extensive work has also focused on proportion-based ranking, instead of exposure-based ranking~\cite{gorantla2023sampling,gorantla2021problem,geyik2019fairness}. We direct the interested reader to ~\cite{raj2022measuring} for a detailed review of fairness metrics. While prior papers have proposed some distribution-based measures~\cite{garcia2021maxmin,diaz2020evaluating}, these have been for fairness under stochastic ranking policies for a single query~\cite{gorantla2023sampling,singh2018fairness}. In contrast, we focus on the multi-query amortized setup and consider distributions driven by attention over a sequence of queries, rather than (only) due to the stochastic nature of rankings.

\looseness=-1
\paragraph{Trade-offs between Group and Individual Fairness.}
Prior work has proposed algorithms to optimize individual fairness without violating group ranking fairness or other constraints such as item diversity~\cite{garcia2021maxmin,saito2022fair,gorantla2021problem,gorantla2023sampling,flanigan2021fair}. Bower \emph{et al.} \citep{bower2020individually} showed empirically that improving individual fairness is beneficial to improving group fairness in in-processing fair ranking. To the best of our knowledge, no work has theoretically analyzed the relation between group and individual ranking fairness in the amortized setting (e.g., if one bounds the other). Note that similar analyses exist in the classification space~\cite{dwork2012fairness}. In this work, we concretely show that under the proposed definitions of fairness, group unfairness is upper-bounded by individual unfairness.

\paragraph{Impact of Queries in Ranking.}
To the best of our knowledge, no prior fairness metrics or interventions utilize information about the query itself in measuring fairness. Closest to our finding is recent work by Patro \emph{et al.}~\cite{patro2022fair}, where the authors observe that ``user attention may not directly translate to provider utility due to missing context-specific factors"~\cite{patro2022fair}. We expand on this observation and empirically show that attention-based metrics may fail specifically in a \emph{cross-query amortization} setup. Our finding is also a generalization of a recent finding~\cite{de2023unfair}, where it was shown that search results can be manipulated in an amortized setting. Our findings also broadly highlight the risk of fairwashing -- e.g., due to search engine manipulation -- when not considering query polarity. Lastly, while notions of multisided exposure fairness, group over-exposure, and under-exposure~\cite{wu2022joint,burke2017multisided,wang2021user} are also related to our problem (where the impact of queries or users are considered in fairness formulation), they still assume that all attention is positive. In our work, we propose a method to integrate real-valued query properties such as sentiment polarity into the fairness definition without making these assumptions.

\section{Amortized Fair Ranking}
\label{ref:section_ranking_def}
Given a query $q$ at time $t$ ($q_t$), we consider a fair ranking task where the goal is to order individuals corresponding to the query optimally~\cite{robertson1977probability}, given relevance score $r_i^t \in \mathbb{R}$ for each individual $i$. The task typically consists of three components: (i) the ``query", (ii) the set of individuals to be ranked, and (iii) the relevance scores. Due to position bias, individuals gain exposure based on their position in the ranking, which directly influences the attention they receive~\cite{biega2018equity,singh2018fairness}. Under the normative principle of equal opportunity, the objective of exposure-based fair ranking is to assign rankings such that the attention allocated to each individual is proportional to their merit~\cite{singh2019policy,zehlike2021fairness,zehlike2022fairness,balagopalan2023role}. In practical terms, merit is operationalized as a value proportional to relevance.

The concept of amortized fair ranking in existing literature seeks to find a sequence of ranking assignments that minimize the discrepancy between the average cumulative attention and the average cumulative relevance of individuals (or groups) over time. Said differently, relevance and attention are averaged over a sequence of queries, and the goal is to ensure fairness over this horizon. In this section, we introduce the notations, definitions, and limitations associated with amortized fair ranking for fair attention allocation. Furthermore, we introduce our distribution and polarity-aware generalization of amortized fair ranking, which results in a more robust solution to the normative goal of equal opportunity.

\subsection{Notation}
Consider a dataset $\mathcal{D}$ of $n$ individuals. Note that we use the term ``individual" here interchangeably with any item or entity being ranked throughout the paper. Each individual $i$ belongs to a group $g \in \Gcal$, where $\Gcal$ represents the set of $G$ possible groups. Let $g_k$ denote the $k^{th}$ group in $\Gcal$ where $k \le G$ and denote group membership as $i \in g_k$ where $g_k \subset \Dcal$. We assume that each individual belongs to exactly one group. Denote $q$ to be a sequence of queries, where queries $q_t$ are submitted at discrete time steps $t \in \Tcal$ and $\mathcal{T} = \{1, 2, \ldots, T\}$. A ranking system accepts each of these $T$ queries independently and returns a distinct ranking of $n$ individuals for each query $q_t \in \Qcal$, where $\Qcal$ denotes the space of all queries.

For each individual $i$ at time $t$, let the binary random variables $X^t_i$ and $Y^t_i$ denote the attention and relevance, respectively. Specifically, $X^t_i \sim \text{Bernoulli}(a^t_i)$~\cite{wang2018position} denotes whether individual $i$ receives attention at time $t$, and $Y^t_i \sim \text{Bernoulli}(r^t_i)$ denotes targets for the attention-distribution based on the relevance of individual $i$ to $q_t$, the query at time $t$. We assume that $X^t_i$ and $X^t_j$ are independent $\forall t$ when $i \neq j$. That is, under the attention models we study, the likelihood of attention is independent of other individuals being ranked, similar to prior work in fair ranking~\cite{biega2018equity,singh2018fairness}. We also assume that queries are independent. Crucially, for each time step $t$, the total attention and relevance are constrained such that \[\sum_{i \in n} a_i^t = 1 \quad \quad \text{and} \quad \quad \sum_{i \in n} r_i^t = 1,\]
such that attention and relevance for individuals are normalized~\cite{biega2018equity} with respect to $n$ individuals at each time step.

Furthermore, denote the cumulative attention and relevance distributions for individual $i$ over the full sequence of queries (all time steps) \[X_i = \sum_{t \in \mathcal{T}} X^t_i \quad \quad \text{and} \quad \quad Y_i = \sum_{t \in \mathcal{T}} Y^t_i,\]
respectively. A glossary of key terms is in Appendix Table~\ref{tab:glossary}.

\begin{theorem} \label{theorem:chernoff}
Let $X_i^t \sim \text{Bernoulli}(p_i^t)$ and
\[
    X_i = \sum_{t\in \Tcal} X_i^t.
\]
Then, for any $\delta > 0$, we have the following:
\[
    P\left(|X_i - \EE[X_i]| \geq \delta \EE[X_i]\right) \leq 2\exp\left(-\frac{\delta^2 \EE[X_i]}{2 + \delta}\right).
\]
\end{theorem}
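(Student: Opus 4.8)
The plan is to establish this as the standard multiplicative Chernoff bound via the exponential-moment (Chernoff--Markov) method, treating the upper and lower tails separately and combining them with a union bound. Write $\mu = \EE[X_i] = \sum_{t \in \Tcal} p_i^t$, and decompose the two-sided event as $\{|X_i - \mu| \geq \delta\mu\} = \{X_i \geq (1+\delta)\mu\} \cup \{X_i \leq (1-\delta)\mu\}$, so it suffices to bound each tail by $\exp\!\left(-\delta^2\mu/(2+\delta)\right)$ and then add the bounds.

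For the upper tail, I would fix $s > 0$ and apply Markov's inequality to $e^{sX_i}$, giving $P(X_i \geq (1+\delta)\mu) \leq e^{-s(1+\delta)\mu}\,\EE[e^{sX_i}]$. Using independence of the $X_i^t$ across $t$ together with the elementary bound $1 + x \leq e^x$, the moment generating function factorizes and simplifies as $\EE[e^{sX_i}] = \prod_{t}\bigl(1 + p_i^t(e^s - 1)\bigr) \leq \exp\!\bigl(\mu(e^s - 1)\bigr)$. Substituting this and minimizing the exponent $\mu(e^s-1) - s(1+\delta)\mu$ over $s$ yields the optimal choice $s = \ln(1+\delta)$ and the classical form $P(X_i \geq (1+\delta)\mu) \leq \left(\frac{e^\delta}{(1+\delta)^{1+\delta}}\right)^{\mu}$.

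The crux of the argument, and the step I expect to be the main obstacle, is the analytic inequality $\delta - (1+\delta)\ln(1+\delta) \leq -\frac{\delta^2}{2+\delta}$ for all $\delta > 0$, which converts the optimized bound into the clean exponential $\exp\!\left(-\delta^2\mu/(2+\delta)\right)$. I would prove it by setting $f(\delta) = \delta - (1+\delta)\ln(1+\delta) + \frac{\delta^2}{2+\delta}$, verifying $f(0) = 0$, and showing $f'(\delta) \leq 0$ for $\delta > 0$, which reduces to a one-variable monotonicity check (for instance via a further differentiation or a short Taylor comparison of $\ln(1+\delta)$ against its rational approximant).

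Finally, for the lower tail I would run the same Chernoff method with $s < 0$, obtaining $P(X_i \leq (1-\delta)\mu) \leq \left(\frac{e^{-\delta}}{(1-\delta)^{1-\delta}}\right)^{\mu} \leq \exp\!\left(-\delta^2\mu/2\right)$, where the case $\delta \geq 1$ is trivial since $X_i \geq 0$ forces this probability to vanish. Because $\exp\!\left(-\delta^2\mu/2\right) \leq \exp\!\left(-\delta^2\mu/(2+\delta)\right)$ for every $\delta > 0$, both tails are dominated by the same quantity, and the union bound introduces the factor of $2$, completing the proof.
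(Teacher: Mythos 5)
Your proposal is correct and follows essentially the same route as the paper's proof: split the two-sided event into upper and lower tails, bound each by the standard multiplicative Chernoff bound for sums of independent Bernoulli variables, and combine via a union bound to get the factor of $2$. The only difference is that the paper simply invokes the classical tail bounds $\exp\left(-\frac{\delta^2 \mu}{2+\delta}\right)$ and $\exp\left(-\frac{\delta^2 \mu}{2}\right)$ as known, whereas you derive them from the moment-generating-function argument and the inequality $\delta - (1+\delta)\ln(1+\delta) \leq -\frac{\delta^2}{2+\delta}$, which is a valid (and more self-contained) filling-in of the same argument.
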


\begin{remark}
    Note that Theorem \ref{theorem:chernoff}'s bound depends solely on the expected value of the cumulative attention (and relevance), not the number of queries observed.
\end{remark}

Theorem \ref{theorem:chernoff} bounds the likelihood of observing a deviation based on delta $\delta$ from the true cumulative attention for an individual over time $t$. We can apply the same exact bound for cumulative relevance.

In our setup, the ranking quality at each timestep $t$ is evaluated using the Discounted Cumulative Gain (DCG) at rank $K$, denoted as DCG@K~\cite{jarvelin2002cumulated}. The DCG@K score measures the quality of the top-$K$ ranked individuals based on their relevance, adjusting for the rank position using a logarithmic discount factor.

\subsection{Motivation For Amortized Fairness Across Different Queries}

This work focuses on a class of attention weights where user attention only depends on their ranking position. We assume that attention is proportional to position and follows a distribution informed by domain knowledge. For example, one such distribution used in several prior works is the log-decaying attention distribution~\cite{singh2018fairness,castillo2019fairness}. Under this distribution, at time $t$, if an individual $i$ is at position $j$, their attention score $a^t_i \propto \frac{1}{log(j+1)}$. 

Individual $i$'s attention $X^t_i$ is distributed as $A^t_i=\text{Bernoulli}(a^t_i)$, where $a^t_i$ is normalized.  Ideally, the probability of an individual receiving attention is proportional to their relevance score $r^t_i$, where relevance scores are $0-1$ normalized across all individuals for a given query.  Thus, under a fair ranking, individual $i$'s attention should be similarly distributed as their relevance, i.e., $a^t_i \approx r^t_i$. However, as mentioned above, the rate at which attention decays across positions in a ranking is usually very different from the variation in relevance across individuals. This makes it difficult to match the attention distribution to that of relevance within a single ranking.  Thus, it may be impossible to achieve the targeted fair attention distribution within a single deterministic ranking~\cite{biega2018equity,diaz2020evaluating}.

Alternatively, we compare \emph{cumulative} --- \emph{amortized}  --- attention and relevance over time. We also assume a more realistic multi-query setup since search systems typically process many queries over time. That is, we consider \emph{online ranking} where a sequence of queries (with corresponding relevance score per individual) arrive over time. We post-process the ranking corresponding to each query (without knowledge of the future queries) to improve fairness.

\subsection{Current Limitations}
Current amortized fairness metrics have two primary limitations: 

\emph{Insufficient measures of distributional differences between cumulated attention and relevance.} Current definitions compare \emph{expected (average) attention} ($\sum_{t \in \mathcal{T}} a_i^t$) to \emph{expected (average) relevance} ($\sum_{t \in \mathcal{T}} r_i^t$) across rankings, which leads to less reliably fair solutions for attention and relevance distributions where first moments (means) are not sufficient statistics (see Appendix Figure~\ref{fig:distribution_reliability}). %

\emph{Failure to capture the impact of query polarity:} Fairness definitions in the literature currently assume that all attention is good. However, increased attention in the context of queries with negative connotations relative to other similarly relevant individuals can lead to unfairness (see Appendix Figure~\ref{fig:fairwashing_amortized_ranking}). %
Hence, incorporating query polarity is necessary to model the real-world impact of unfair rankings. %

\subsection{Problem Statement}
We consider (un)fairness to be a function:
\begin{equation}
  f \colon \Pcal(\Xcal) \times \Pcal(\Ycal) \times \RR^T \longmapsto \mathbb{R}
\end{equation}
where $A \in \Pcal(\Xcal)$ denotes a distribution of cumulative attention and $R \in \Pcal(\Ycal)$ denotes a distribution of cumulative relevance for an individual. $Polarity$ is a vector containing the polarity of each of $T$ queries over which attention and relevance are cumulated. A lower value is desired.

Our task is to find a class of such functions such that:
\begin{equation}
    f(A, R, \text{polarity}) = 0 \implies \text{set of rankings is fair}
\end{equation}
We define $f$ to take the form of a scoring function for distribution-aware fairness in ranking (DistFaiR) and identify compatible measures for cumulative attention and relevance distributions in Section~\ref{sec:defining_amortized_distribution_fairness}. We then show that these measures can be modified to depend on query polarity in measurement (Section~\ref{sec:accounting_query_polarity}). Lastly, we test the sensitivity of current fair ranking metrics to polarity. This is an important step to assess \emph{fairwashing} effects in rankings~\cite{aivodji2019fairwashing}.  That is, whether fairness measured using query polarity is higher than that measured without using query polarity.

\newcommand{\cummA}{A_i}
\newcommand{\cummR}{R_i}
\newcommand{\groupcummA}{A_{g_{k}}}
\newcommand{\groupcummR}{R_{g_{k}}}

\section{Distribution-aware Fairness in Ranking (DistFaiR)}
\label{sec:defining_amortized_distribution_fairness}

We propose new distribution-based definitions of amortized fairness. We denote $\cummA$ and $\cummR$ to be the distribution of an individual $i$'s cumulative attention and relevance till time $T$ respectively. This is in contrast with prior definitions~\cite{biega2018equity,singh2018fairness,morik2020controlling}, where only the mean of the attention distribution over queries is considered for individuals and groups. We start by defining a class of amortized individual and group unfairness (DistFaiR) and then theoretically characterize a relationship between the two for a class of discrepancy measures.

\subsection{Defining Amortized Fairness}
\begin{definition}[DistFaiR-Divergence] \label{def:divergence}
Given two probability distributions $P$ and $Q$ over a common sample space $\Omega$, a divergence $D(P \| Q)$ is a function with the following properties:

\begin{enumerate}
    \item {\em Non-negativity}: $D(P \| Q) \geq 0$
    \item {\em Positivity}: $D(P \| Q) = 0$ if and only if $P = Q$
\end{enumerate}

\begin{lemma}
Define the following:
    \begin{align*}
        D_{L_1}(P \| Q) &= |\mu_P - \mu_Q|
    \end{align*}

    $D_{L_1}$ satisfies definition \ref{def:divergence} for $P$ and $Q$ when $\mu_P$ and $\mu_Q$ are sufficient statistics for their respective distributions. Additionally, it is subadditive, positively homogeneous, and scales under averages.
\end{lemma}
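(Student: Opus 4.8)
The plan is to check the two axioms of Definition~\ref{def:divergence} and then the three algebraic properties, using throughout the single structural fact that the mean is a \emph{linear} functional of the underlying random variable. Non-negativity is immediate, since $D_{L_1}(P\|Q)=|\mu_P-\mu_Q|$ is an absolute value. For positivity the forward direction $P=Q\Rightarrow\mu_P=\mu_Q\Rightarrow D_{L_1}(P\|Q)=0$ is trivial, so the only real content is the converse. This is exactly where the hypothesis is needed: from $|\mu_P-\mu_Q|=0$ we get $\mu_P=\mu_Q$, and because the mean is assumed to be a sufficient statistic for the family under consideration --- i.e.\ a member of the family is uniquely determined by its mean --- equality of means forces $P=Q$. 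I would state this converse carefully and note that it \emph{fails} in general: two laws can share a mean yet differ in variance, so the ``only if'' half of positivity is precisely what the sufficiency hypothesis buys.

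For subadditivity I would read the statement over independent sums, matching the cumulative construction $X_i=\sum_{t}X_i^t$ in the setup; together with positive homogeneity this is just the assertion that $D_{L_1}$ is a sublinear (seminorm-like) functional on distributions, where convolution plays the role of addition and scaling that of scalar multiplication. Writing $P_1\!\ast\!P_2$ for the law of a sum of independent draws, linearity of expectation gives $\mu_{P_1\ast P_2}=\mu_{P_1}+\mu_{P_2}$, and then the triangle inequality yields
\[
D_{L_1}(P_1\!\ast\!P_2\,\|\,Q_1\!\ast\!Q_2)=\bigl|(\mu_{P_1}-\mu_{Q_1})+(\mu_{P_2}-\mu_{Q_2})\bigr|\le D_{L_1}(P_1\|Q_1)+D_{L_1}(P_2\|Q_2).
\]
Positive homogeneity reduces to homogeneity of the mean under scalar multiplication: for $c>0$, letting $cP$ be the law of $cX$ with $X\sim P$, one has $\mu_{cP}=c\mu_P$, hence $D_{L_1}(cP\|cQ)=c\,D_{L_1}(P\|Q)$. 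Scaling under averages then follows by combining the previous two properties: for the averaged (mixture) laws $\bar P=\tfrac1n\sum_i P_i$ and $\bar Q=\tfrac1n\sum_i Q_i$, linearity gives $\mu_{\bar P}=\tfrac1n\sum_i\mu_{P_i}$, and the triangle inequality yields $D_{L_1}(\bar P\|\bar Q)\le\tfrac1n\sum_i D_{L_1}(P_i\|Q_i)$ --- the Jensen-type bound that lets per-query divergences control the amortized (and, later, group-level) divergence.

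The main obstacle is not any computation --- each is a single line --- but pinning down the positivity hypothesis and being honest about how restrictive it is. The key conceptual point, and the reason the lemma is stated at all, is that $D_{L_1}$ is a genuine divergence only on families where the mean is identifying; on the paper's own cumulative attention laws $X_i=\sum_t X_i^t$, which are Poisson-binomial, the mean is \emph{not} a sufficient statistic, so $D_{L_1}$ satisfies non-negativity but violates positivity. I would therefore make explicit that ``$\mu_P,\mu_Q$ are sufficient statistics'' should be read as ``the laws lie in a family indexed by their mean,'' and flag that this delineates exactly when the naive mean-difference measure used in prior work qualifies as a DistFaiR divergence.
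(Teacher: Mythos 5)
Your proposal is correct and follows essentially the same route as the paper's own proof: non-negativity from the absolute value, positivity from the sufficiency hypothesis (mean identifies the law within the family), subadditivity from linearity of expectation under convolution plus the triangle inequality, positive homogeneity (degree one) from $\mu_{cP}=c\mu_P$, and scaling under averages again from the triangle inequality. Your reading of ``averages'' as mixtures rather than laws of averaged independent random variables is immaterial here since the mean functional is linear under both operations, and your added observation that positivity fails for the paper's Poisson-binomial cumulative attention laws is a correct (and sharper) delineation of the hypothesis than the paper itself gives.
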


\end{definition}

\subsubsection{Individual Fairness}
\begin{definition}[Amortized Individual Unfairness] 
Amortized Individual Unfairness for a set of individuals is defined as the maximum distance between the distributions of cumulative relevance and cumulative attention over a sequence of queries up to time $T$. Specifically, the unfairness is given by:
\[
\text{Unfairness} = \max_{i \in \{1,2,\dots,n\}} D(\cummA, \cummR),
\]
where $i$ indexes the individuals to be ranked, and $D$ is a divergence.
\end{definition}

Notably, this definition differs from past definitions of amortized fairness~\cite{biega2018equity,singh2018fairness,singh2019policy,raj2022measuring} as follows: (1) the distribution-based fairness definition allows for distributions attention and relevance that are not fully specified by their means, (2) considers a worst-case notion of individual unfairness. For example, in ~\citep{biega2018equity}, unfairness is defined to be the $L_1$ distance of difference between cumulative relevance and cumulative exposure scores allocated to a set of $n$ individuals over $T$ queries. In our framework, this is equivalent to choosing a metric $d(P, Q) = |\EE[P] - \EE[Q]]|$, or the absolute difference in expectations of the two distributions. However, this only captures discrepancies between distributions $P, Q$ where means are sufficient statistics, e.g., Guassians with fixed variances or Exponential with rate parameters reciprocal to the mean. Appendix \ref{sec:dist_example} demonstrates that divergences, which capture properties of distributional difference beyond means, give a more robust and realistic definition of unfairness. %

\subsubsection{Group Fairness}~\\
We extend the previous definition to group level by defining the relevance and attention of a group as the average relevance and attention of individuals belonging to that group, respectively. The attention and relevance of a group $g_k \subset [n]$ at time $t$ respectively are random variables:
\begin{align}
    X_{g_{k}}^{t} = \frac{1}{|g_{k}|}\sum\nolimits_{i \in g_{k}} X_i^t & \quad \text{ and } \quad Y_{g_{k}}^{t} = \frac{1}{|g_{k}|} \sum\nolimits_{i \in g_k} Y_i^t ,
\end{align}
where $|g_{k}|$ denotes the number of individuals in group $g_{k}$, with $|g_{k}| \geq 1$. We can also apply Theorem \ref{theorem:chernoff} to quantify the tail probability of group level relevance and attention.

The relevance distribution and attention in a group $g_k \subset [n]$ throughout time $t \in \Tcal$ are respectively:
    \begin{align}
        X_{g_{k}} = \sum\nolimits_{t \in \Tcal} X_{g_{k}}^{t} & \quad \text{ and } \quad Y_{g_k} = \sum\nolimits_{t \in \Tcal} Y_{g_{k}}^{t} .
    \end{align}
    
Denote $\groupcummA$ and $\groupcummR$ as the distributions of cumulative attention and relevance from which $X_{g_k}$ and $Y_{g_k}$ are generated.

\begin{definition}[Amortized Group Unfairness]\label{def:group-unfairness}
Amortized Group Unfairness for a set of $G$ groups is defined as the maximum distance between the distributions of cumulative relevance and cumulative attention scores across a sequence of queries up to time $T$ for each group. Each individual is assumed to belong to exactly one of the $G$ groups. Formally, group unfairness is expressed as:
\[
\text{Group Unfairness} = \max_{g_k \in \Gcal} D(\groupcummA \| \groupcummR),
\]
where $D$ represents a divergence, $g_k$ denotes the $k$-th group, and $\groupcummA$ and $\groupcummR$ represent the distributions of cumulative attention and cumulative relevance for group $g_k$, respectively.
\end{definition}

We refer our definitions of amortized individual and group unfairness above as \textit{DistFaiR}.

\subsection{Individual Fairness v.s. Group Fairness}

\begin{theorem}\label{theorem:indiv_group}
    For any jointly convex DistFaiR divergence that is subadditive under the convolution operation, positively homogeneous with degree $s$, and scales under averages, amortized group fairness is upper-bounded by amortized individual fairness. Specifically, we have the following inequality:
    \begin{align}
        \max_{g_k \in \Gcal}D(\groupcummA \| \groupcummR) \leq  \max_{i \in \Dcal} D(A_i \| R_i) \hspace{0.5cm} 
        \forall g_k \in \mathcal{G}
    \end{align}
\end{theorem}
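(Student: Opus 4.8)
The plan is to prove the inequality group by group: it suffices to show that for an arbitrary fixed group $g_k$ with $m = |g_k|$ members, $D(\groupcummA \| \groupcummR) \leq \max_{i \in \Dcal} D(\cummA \| \cummR)$, since taking the maximum over $g_k \in \Gcal$ on the left then yields the claim. The starting point is that the group-level cumulative variables are scaled sums of the individual ones: $X_{g_k} = \frac{1}{m}\sum_{i \in g_k} X_i$ and $Y_{g_k} = \frac{1}{m}\sum_{i \in g_k} Y_i$. Because the attention variables $X_i$ are independent across individuals (and likewise relevance, by the model's independence assumptions), the distribution of $\sum_{i \in g_k} X_i$ is the convolution $\ast_{i \in g_k} \cummA$, and similarly for relevance; thus $\groupcummA$ is this convolution rescaled by $1/m$, and the same holds for $\groupcummR$.

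First I would peel off the $1/m$ rescaling using positive homogeneity of degree $s$, writing $D(\groupcummA \| \groupcummR) = (1/m)^s \, D\!\left(\ast_{i \in g_k}\cummA \,\middle\|\, \ast_{i \in g_k}\cummR\right)$. Next I would apply subadditivity under convolution to the paired convolutions, obtaining $D\!\left(\ast_{i \in g_k}\cummA \,\middle\|\, \ast_{i \in g_k}\cummR\right) \leq \sum_{i \in g_k} D(\cummA \| \cummR)$. Combining the two steps gives $D(\groupcummA \| \groupcummR) \leq (1/m)^s \sum_{i \in g_k} D(\cummA \| \cummR)$.

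The final steps use the \emph{scales under averages} property together with the trivial bound $\sum_{i \in g_k} D(\cummA\|\cummR) \leq m \max_{i \in g_k} D(\cummA\|\cummR)$. Scaling under averages is precisely what guarantees that the normalized quantity collapses to an honest average dominated by its maximum term, yielding $D(\groupcummA\|\groupcummR) \leq \frac{1}{m}\sum_{i\in g_k} D(\cummA\|\cummR) \leq \max_{i \in g_k} D(\cummA\|\cummR) \leq \max_{i \in \Dcal}D(\cummA\|\cummR)$. Joint convexity enters as the structural property underwriting this averaging step — it is what lets the divergence of an averaged configuration be controlled by the average of individual divergences, so per-member contributions are summed with weight $1/m$ rather than accumulating.

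The main obstacle I anticipate is reconciling the homogeneity exponent $s$ with the $1/m$ normalization: the homogeneity step alone produces a factor $(1/m)^s$, whereas the convolution bound produces a factor $m$ from the sum, so the naive product is $m^{1-s}\max_i D(\cummA\|\cummR)$, which collapses to $\max_i D(\cummA\|\cummR)$ only when $s \geq 1$. The crux is therefore to verify that, for the DistFaiR divergences under consideration, the \emph{scales under averages} property holds in the precise form that closes this gap — as it does for $D_{L_1}$, where $s=1$ and $|\mu_{\groupcummA}-\mu_{\groupcummR}| = |\frac{1}{m}\sum_i(\mu_{\cummA}-\mu_{\cummR})| \leq \frac{1}{m}\sum_i D_{L_1}(\cummA\|\cummR)$. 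Care must also be taken that the independence assumptions justify treating group aggregates as convolutions of individual distributions; this is the single place where the probabilistic modeling assumptions feed into the otherwise purely analytic argument.
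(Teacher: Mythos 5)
Your proposal is correct and follows essentially the same route as the paper's own proof: represent the group distributions as $1/|g_k|$-scaled convolutions of the individual distributions (justified by independence), apply positive homogeneity and subadditivity under convolution to obtain $(1/|g_k|)^s \sum_{i \in g_k} D(A_i \| R_i)$, bound the sum by $|g_k| \max_{i} D(A_i \| R_i)$, and take the maximum over groups. The only difference is the order in which you apply homogeneity and subadditivity (you peel off the scaling first, the paper scales each summand after subadditivity), and you are in fact more explicit than the paper about the condition $s \geq 1$ (equivalently, the ``scales under averages'' property) needed for $|g_k|^{1-s} \leq 1$ to close the argument.
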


Proof provided in Appendix \ref{ref:sec_proof}.

Theorem \ref{theorem:indiv_group} shows that improving individual fairness does not adversely affect group fairness for a class of divergence measures --- optimizing for individual fairness may improve group fairness. Thus, while individual fairness is good criteria, it may not always be possible to ensure individual fairness for some divergence measures (e.g., due to computational infeasibility). In such cases, group fairness constraints could be considered weaker versions of individual fairness, and could be be used more broadly, depending on the underlying normative goals.

\subsection{Amortized Fairness Re-ranking with Quality Constraints}
\label{sec:re-ranking}
Theorem \ref{theorem:indiv_group} motivates optimizing for individual (un)fairness. Accordingly, we design an objective function corresponding to individual unfairness to be minimized, similar to Biega \emph{et al.} \cite{biega2018equity}.
\begin{align}
     \min\nolimits_{M_{i,j}^t} & \quad \text{max}_{i \in n} \quad D(\cummA \| \cummR) \quad  \text{(individual fairness)} \\
    \text{s.t.} & \quad  \sum\nolimits_{j=1}^k \sum\nolimits_{i=1}^n \frac{r_i^t}{\log_2(j+1)} M^t_{i,j} \geq \theta*\rho(t) \quad \text{for each } t \in \mathcal{T} \\
    & \quad M^t_{i,j} \in \{ 0, 1 \} \quad \forall i,j\\
    & \quad \sum\nolimits_{i} M^t_{i,j} = 1 \quad \forall j \\
    & \quad \sum\nolimits_{j} M^t_{i,j} = 1 \quad \forall i
\end{align}
where $A_i$ and $R_i$ denote cumulative attention and relevance for individual $i$ till time $t$, $M^{t}_{i,j}$ is a binary variable indicating if individual $i$ is present at rank $j$ for the query at time $t$. $\rho(t)$ indicates the DCG (quality) of the ranking at time $t$. Constraint (7) ensures that the quality of the updated ranking does not decrease beyond a given threshold $\theta$. Additionally, constraints (8) and (9) ensure that each individual can be ranked only once in a ranking and no positions are empty, respectively. Given the large size of the variable space, when $n$ is large, we pre-filter the rankings and set $M_{i,j}^t$ to be fixed when $j > K$ for some known $K \in \mathbb{N} < n$. Thus, we only re-order the top-$K$ within each ranking. 

\paragraph{Integer Linear Programming Formulation}
We solve the above optimization problem using integer linear programming and/or integer quadratic programming. We rely on an open-source toolkit, Gurobi~\cite{achterberg2019s} to perform all optimizations where minimizing our objective yields amortized fairness. We study \emph{online optimization} where a new query arrives at each $t$, and hence $M_{i,j}^t$ is optimized at each time step, with knowledge of prior assignments~\cite{biega2018equity}, but no knowledge of the future. Further details can be found in Appendix~\ref{sec:ilp_appendix}. 

\begin{table}
\centering
\caption{Summary statistics of all datasets. The relevance score in the \texttt{rateMDs} dataset and the query utility score in the \texttt{FairTREC2021} dataset are generated using pre-trained LLMs.}
\label{tab:ds_summary}
\footnotesize{
\begin{minipage}{\linewidth}
\resizebox{\textwidth}{!}{

  \begin{tabular}{lcccccc}
  \toprule
    Dataset & \#Individuals &  \#Queries  & \#Groups & Relevance & Polarity\\
    \midrule
    \texttt{synth-binary} & 200 & 16 &2 & $\{0.99,1.01\}$ & $\{-1,1\}$ \\
    \texttt{synth-cont} & 200 & 16 & 2 & Cont. & $\{-1,1\}$\\
    \texttt{rateMDs} & 6.2k & 60 & 2 & Cont. & $\{-1,1\}$\\
    \texttt{FairTREC 2021} & 13.5k & 49 & 3 & $\{0,1\}$ & $[-1,1]$\\
    \bottomrule
    \end{tabular}
}
\end{minipage}
}
\end{table}

\section{Accounting for Query Polarity}
\newcommand{\polarity}{\eta}
\label{sec:accounting_query_polarity}
Prior work in fair ranking assumes that all attention is positive~\cite{biega2018equity,singh2018fairness} and query independent, implying that achieving a higher rank is universally desirable. However, individuals should not be given higher attention for queries with negative connotations than those with similar relevance. Consequently, we extend our fairness definition to account for query properties such as \emph{sentiment polarity} by introducing a {\em context function} associated with each query. 

In this work, we focus on the scalar sentiment polarity associated with each query. Alternative properties may include the clarity of the query, the perceived economic value associated with being highly ranked for the query, etc. This variable will be influenced --- at least partially --- by the information contained in the query, and may be positive or negative, determining if a higher or lower ranking is more favorable. We also show how this context function can be extended beyond scalars to include a vector of query properties.

Let $\Tilde{X}_i^t$ represent a random variable denoting attention allocated to individual $i$ at time t that incorporates query polarity. Assuming that polarity is searcher-independent (no personalization), it can be decomposed into: (1) the real-world value associated with the attention allocated in response to a query at time $t$ and (2) individual attention. Similar to previous works on fair ranking, we may assume that searcher attention can be modeled well with models like position bias~\cite{chuklin2015click,craswell2008experimental}. %

We denote the context function $\polarity(q_t)$ as the polarity associated with query at time $t$. Query polarity-aware attention and relevance allocated to individual $i$ at time $t$ is then defined as:
\begin{equation}
    \Tilde{X}_i^t = X_i^t \cdot \polarity(q_t) \quad \quad \text{ and } \quad \quad \Tilde{Y}_i^t = Y_i^t \times \polarity(q_t),
\end{equation}
where each corresponds to a cumulative distribution $\Tilde{A}_i$ and $\Tilde{R}_i$, respectively. This formulation is free of two assumptions inherent to the exposure-based fairness metrics: (1) the contribution of exposure to amortized ranking is now dependent on properties of the query and (2) exposure can be any real-valued number. Notably, $\eta(q_t) \in \RR$, including \emph{negative} values and \emph{zero}, unlike previous work. Then, amortized fairness under DistFaiR can be computed over time, with all notations following from the previous section. %
We refer to fairness measures defined in the prior section as \emph{query polarity-agnostic}, and those relying on $\eta(q_t)$ as \emph{query polarity-aware}.

\begin{theorem} \label{theorem:hoeffdings}
Let $X_i^t \sim \text{Bernoulli}(p_i^t)$ and $\eta(q_t) \in [a_t, b_t]$; $a_t, b_t \in \RR$. With a slight abuse of notation, let $\Tilde{X}_i^t = X_i^t\cdot \eta(q_t) \in [a_t, b_t]$ and
\[
    \Tilde{X}_i = \sum_{t\in \Tcal} \Tilde{X}_i^t,
\]
Then, for any $\delta > 0$, we have the following:
\[
    P\left(|\Tilde{X}_i - \EE[\Tilde{X}_i]| \geq \delta\right) \leq 2\exp\left(-\frac{2\delta^2}{\sum_{t \in \Tcal} (b_t - a_t)^2}\right).
\]
\end{theorem}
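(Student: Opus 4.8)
The plan is to recognize Theorem~\ref{theorem:hoeffdings} as a direct instance of Hoeffding's inequality for a sum of independent, bounded random variables, so the work reduces to verifying the two hypotheses and then running the standard Chernoff-style argument. First I would check boundedness: each summand $\tilde{X}_i^t = X_i^t \cdot \eta(q_t)$ takes values in the interval $[a_t, b_t]$ almost surely, which is precisely what the statement posits (under the noted abuse of notation, since $X_i^t \in \{0,1\}$ and $\eta(q_t) \in [a_t,b_t]$). Second I would check independence across $t$: because the standing assumption is that the queries are independent, both the polarity values $\eta(q_t)$ and the attention variables $X_i^t$ are independent across time steps, so the family $\{\tilde{X}_i^t\}_{t\in\Tcal}$ is mutually independent for fixed $i$. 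With these two facts the claimed bound is exactly the two-sided Hoeffding inequality applied to $\tilde{X}_i = \sum_{t\in\Tcal}\tilde{X}_i^t$.

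For completeness I would reproduce the short derivation rather than only citing it. I would first control the upper tail: for any $\lambda>0$, Markov's inequality applied to $e^{\lambda(\tilde{X}_i-\EE[\tilde{X}_i])}$ gives $P(\tilde{X}_i-\EE[\tilde{X}_i]\geq\delta)\leq e^{-\lambda\delta}\,\EE[e^{\lambda(\tilde{X}_i-\EE[\tilde{X}_i])}]$. By independence the moment generating function factorizes over $t$, and Hoeffding's lemma bounds each factor by $\EE[e^{\lambda(\tilde{X}_i^t-\EE[\tilde{X}_i^t])}]\leq \exp(\lambda^2(b_t-a_t)^2/8)$. Multiplying the factors yields $P(\tilde{X}_i-\EE[\tilde{X}_i]\geq\delta)\leq \exp(-\lambda\delta+\tfrac{\lambda^2}{8}\sum_{t\in\Tcal}(b_t-a_t)^2)$, and optimizing the exponent over $\lambda$ (at $\lambda = 4\delta/\sum_{t\in\Tcal}(b_t-a_t)^2$) gives the one-sided bound $\exp(-2\delta^2/\sum_{t\in\Tcal}(b_t-a_t)^2)$. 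The identical argument applied to $-\tilde{X}_i$ handles the lower tail, and a union bound over the two tail events contributes the factor of $2$, producing the stated inequality.

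The only ingredient of any depth is Hoeffding's lemma, which I would either cite or justify by a convexity bound on $z\mapsto e^{\lambda z}$ over $[a_t,b_t]$ combined with a second-order Taylor expansion of the resulting log-moment-generating function; I do not expect a genuine obstacle there. Unlike Theorem~\ref{theorem:chernoff}, whose bound specialized the Chernoff method to Bernoulli sums and therefore depended on $\EE[X_i]$, this statement is the generic bounded-variable version, so its substantive content lies in the modeling assumptions (query independence and boundedness of polarity) rather than in the probabilistic estimate. The one point I would state carefully is that $\tilde{X}_i^t$ inherits the interval $[a_t,b_t]$ entering the variance proxy, so that the denominator $\sum_{t\in\Tcal}(b_t-a_t)^2$ is the correct normalization as written.
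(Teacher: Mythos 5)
Your proposal is correct and follows essentially the same route as the paper's proof: both verify boundedness of each $\Tilde{X}_i^t \in [a_t,b_t]$ and independence across $t$, apply Hoeffding's inequality for sums of independent bounded random variables to each tail, and combine the two tails with a union bound to obtain the factor of $2$. The only difference is that you unpack the standard derivation of Hoeffding's inequality (Markov's inequality, MGF factorization, Hoeffding's lemma, and optimization over $\lambda$), whereas the paper simply cites it as a known result.
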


\begin{remark}
    Unlike in Theorem \ref{theorem:chernoff}, the bounds in Theorem \ref{theorem:hoeffdings} now depend on both the expected value of the cumulative attention (and relevance) and the number of queries observed.
\end{remark}

Thereom \ref{theorem:chernoff} bounds the likelihood of observing a given deviation $\delta$ from the true polarity-aware cumulative attention for an individual over time $t$. We can apply the same exact bound for polarity-aware cumulative relevance.

\begin{table*}[
htb!]
\centering
\caption{{Individual fairness improves with DistFaiR re-ranking intervention, but the difference depends on the divergence measure used.} We show \emph{relative improvement} in fairness post- fair ranking intervention with respect to the original ranking. The columns (i.e., $\Delta$ measure) correspond to different fairness measures, while each row corresponds to a fair re-ranking method. Post-processing the rankings with DistFaiR improves individual fairness across datasets. Group fairness also improves with DistFaiR in most cases. Arrows indicate direction of better performance, with best performance bolded for each fairness metric. {\em Note that the criterion of the fairness scores varies across cross-columns, so cross-column comparisons are incorrect. }}
\label{tab:full_results_main}
\adjustbox{max width=0.8\linewidth}{%
 \begin{tabular}{llcccccc}
\toprule
Dataset & Method  & \multicolumn{3}{c}{Relative Change in Individual Fairness ($\uparrow$)}  &\multicolumn{3}{c}{Relative Change in Group Fairness ($\uparrow$)} \\
\cmidrule(r){3-5}\cmidrule(r){6-8}\\
 &  & $\Delta$ DistFaiR ($L_1$) & $\Delta$ DistFaiR ($L_2^{var}$) & $\Delta$ DistFaiR ($W_1$)  & $\Delta$ DistFaiR ($L_1$) & $\Delta$ DistFaiR ($L_2^{var}$) & $\Delta$ DistFaiR ($W_1$)  \\

\midrule
\multirow{5}{*}{\texttt{synth-binary}} & IAA & \textbf{82.50}\% & \textbf{90.89\%} & \textbf{68.18\%} & 8.84\% & 12.06\% & 0.54\%\\
 & FoE & 9.17\% & 14.65\% & 7.58\% & 20.81\% & 18.89\% & 1.85\%\\
  \cdashline{2-8}[1pt/3pt]
 & DistFaiR($L_1$) & \textbf{82.50\%} & \textbf{90.89\%} & \textbf{68.18\%} & 47.05\% & 40.87\% & 5.38\%\\
 & DistFaiR($L_2^{var})$ & 76.50\% & \textbf{90.89\%} & 65.02\% & 59.05\% & 48.55\% & \textbf{6.58}\% \\
 & DistFaiR($W_1$) & 77.81\% & 90.68\% & \textbf{68.18\%} & \textbf{76.10\%} & \textbf{67.26\%} & 3.69\%\\
 
\midrule

\multirow{5}{*}{\texttt{synth-cont}} & IAA & 61.39\% & \textbf{63.56\%} & 40.76\% & \textbf{38.52\%} & \textbf{42.22\%} & 32.32\% \\
& FoE & 2.02\% & -1.23\% & 3.74\% & -139.41\% & -273.17\% & 5.92\%\\
\cdashline{2-8}[1pt/3pt]
 & 
 DistFaiR($L_1$) & \textbf{62.02\%} & 60.42\% & 39.64\% & -36.10\% & -60.62\% & 66.89\%\\
 &
 DistFaiR($L_2^{var}$) & 61.84\% & 62.20\% & 39.95\% & -39.66\% & -75.31\%  & \textbf{67.88\%}\\
 & DistFaiR($W_1$) & 51.22\% & 58.29\% & \textbf{40.89\%} & -125.17\% & -235.10\% & 58.11\% \\
\midrule

\multirow{5}{*}{\texttt{FairTREC2021}}  & IAA &  \textbf{68.76\%} & \textbf{78.69\%} & \textbf{64.95\%} & 48.99\% & 76.02\% & \textbf{4.32\%} \\
& FoE & 17.52\% & 25.64\% & 17.36\% & 45.67\% & 68.15\% & -5.47\%\\
  \cdashline{2-8}[1pt/3pt]
 &DistFaiR($L_1$)& \textbf{68.76\%} & 77.75\% & \textbf{64.95\%} & \textbf{50.78\%} & \textbf{80.61\%} & 0.34\%\\
 & DistFaiR($L_2^{var}$) & \textbf{68.76\%} & \textbf{78.69\%} & 64.57\% & 42.62\% & 71.35\% & 0.36\%\\
 & DistFaiR($W_1$) &  68.27\% & 78.40\% & 64.77\% & 50.13\% & 78.46\% & -18.66\%\\

\midrule
\multirow{5}{*}{\texttt{rateMDs}}  & IAA &28.30\% & 46.03\% & 22.48\% & -7.94\% & -87.81\% & 5.82\%\\
& FoE & 3.37\% & 6.15\% & 2.99\% & 26.96\% & 40.71\% & 2.37\%\\
  \cdashline{2-8}[1pt/3pt]
 & 
 DistFaiR($L_1$)&\textbf{69.80\%} & 86.74\% & 62.90\% & \textbf{62.19\%} & \textbf{79.90\%} & 5.18\% \\
 &DistFaiR($L_2^{var}$) & 66.83\% & \textbf{86.75\%} & 59.74\% & 39.99\% & 60.80\% & 1.67\% \\
 & DistFaiR($W_1$) & 67.76\% & 85.34\% & \textbf{64.28\%} & 57.41\% & 78.10\% & \textbf{7.89\%} \\

\bottomrule
\end{tabular}}
\end{table*}

\section{Experiments: Online Fair Ranking}  

Our experiments are focused on an \emph{online fair ranking setup}, similar to ~\cite{biega2018equity}. We assume a realistic setup where a new query arrives at each time $t$, and we re-rank the system-produced ranking at time $t$ to improve fairness. We assume knowledge of attention allocated to individuals in rankings till time $t$ to produce this new fair ranking (i.e., a running memory of cumulative attention per individual)\footnote{Code: \url{https://github.com/MLforHealth/DistFaiR}}.

\subsection{Experimental Setup}
\paragraph{Datasets} We utilize two synthetic datasets which represent the setting described in the example shown in Figure~\ref{fig:fairwashing_amortized_ranking} where female individuals are allocated attention in four out of eight rankings (all with negative polarity) and two real-world fair ranking datasets~\cite{thawani2019online,trec-fair-ranking-2021}. A dataset summary is in Table~\ref{tab:ds_summary} and further details are provided in Appendix~\ref{sec:datasets}. We also benchmark the impact of query polarity on the Xing dataset~\cite{zehlike2017fa} in the Appendix (see Appendix~\ref{app:xing_dataset}).  Our empirical study focuses on post-processing fairness interventions, where individual relevance -- or ``groundtruth" -- scores are known~\cite{gorantla2023sampling}.

\paragraph{Query Properties} We experiment with polarity as the query property. The polarity score is synthetically generated for \texttt{synth-binary} and \texttt{synth-cont} and manually annotated for \texttt{rateMDs}. For the \texttt{FairTREC 2021} dataset, a pre-trained sentiment classification model is used to generate polarity~\cite{barbieri-etal-2020-tweeteval} (see Appendix~\ref{sec:datasets}). 

\paragraph{Distance Functions}
\label{sec:distance_metrics}
We consider three (pseudo) divergences metrics for measuring unfairness under DistFaiR:
\begin{itemize}[leftmargin=0.2in]
    \setlength\itemsep{0em}
    \item $\mathbf{L_1}$ distance is defined as the difference between the mean of two distributions: $D_{L_1}(A \| R) = |\mathbb{E}_{X \sim A}[X] - \mathbb{E}_{Y \sim R}[Y]|$. 
    \begin{itemize}
        \item This distance function has been studied in ~\cite{biega2018equity}, where fairness is computed as the sum of distance values across individuals and is referred to as the inequity of amortized attention (IAA).  We note that this function is generally not a proper divergence. However, for distributions $A$ and $R$ whose first moments are sufficient statistics, $D_{L_1}$ satisfies definition \ref{def:divergence}.
    \end{itemize}
    \item $\mathbf{L_2^{\text{var}}}$ distance is defined as the difference in mean and variance of two distributions\footnote{We use squared differences as we expect a square root of this to perform similarly.}: 
    \begin{align*} D_{L_2^{\text{var}}}(A \| R) &= (\mathbb{E}_{X \sim A}[X] - \mathbb{E}_{Y \sim R}[Y])^2 \\&+ (\sigma{}_{X \sim A}[X] - \sigma_{Y \sim R}[Y])^2.\end{align*}
    We note that $D_{L_2^{\text{var}}}$ benefits from $W_2$, a proper divergence, for two Gaussians, which has the properties for Theorem \ref{theorem:indiv_group}. %
    \item $\mathbf{W_{1}}$ distance is defined as the Wasserstein distance between the distribution of expected attention ($\{a_i^t\}_{t=1}^{\mathcal{T}}$) and distribution of expected relevance ($\{r_i^t\}_{t=1}^{\mathcal{T}}$) for an individual. $D_{W_1}(A \| R)=\frac{1}{T}\sum_{k=1}^{T} |a_i^{(k)} - r_i^{(k)}|$, where $(k)$ denotes the $k$th order statistic of empirical measures $\hat{A}_i$ and $\hat{R}_i$ from which $a_i^t$ and $r_i^t$ is sampled. 
    
\end{itemize}
\subsection{Evaluation}
\label{sec:metrics}
We utilized the following fairness criteria.

\paragraph{Individual Unfairness:} We use three different distance measures defined in Section~\ref{sec:distance_metrics} to measure the unfairness as:  DistFaiR($L_1$), DistFaiR($L_2^{\text{var}}$), and DistFaiR($W_1$).
The amortized fairness defined by DistFaiR($L_1$) is similar to the IAA fairness measure studied by \cite{biega2018equity}. However, we consider the \emph{worst-case} distance between attention and relevance distributions, while \cite{biega2018equity} consider the sum of difference across all individuals, which may hide heightened unfairness in some individuals. Our work also generalizes amortized fairness to include appropriate measurements of discrepancies between distributions that require higher-order moments to be specified, i.e., with $L_2^{\text{var}}$ and $W_1$ distances.

\paragraph{Group Unfairness:}
In addition to the group unfairness metrics directly induced by the three distance metrics using Definition~\ref{def:group-unfairness}, we consider a standard exposure-based group unfairness definitions: Exposed Utility Ratio (EUR). \cite{singh2018fairness,morik2020controlling} define the EUR difference as the absolute difference in the ratios of average exposure and average relevance between groups. We also measure an attention parity metric: Demographic Parity\cite{morik2020controlling} (DP).

\paragraph{Performance}
\label{sec:perf}
We measure the ranking quality via the DCG@K score, which is the sum of the relevance of the top-K individuals, with a logarithmic discount based on their position: 
$$\sum_{k=1}^{K}\frac{r_{\text{rank}(k)}^t}{log_2(k+1)},$$
where ${\text{rank}(k)}$ returns the index of the individual at rank $k$. After re-ranking, the DCG@K is normalized by the DCG@K of the previous (ideal) ranking to produce a normalized DCG@K between 0 and 1.

\subsection{Baselines: Fair Re-ranking }

\textbf{IAA}: A method to reduce inequity of amortized attention (IAA) introduced by Biega \emph{et al.}\cite{biega2018equity}. An ILP is solved to reduce the absolute difference in the mean of the cumulative attention and cumulative relevance distributions, summed across all individuals. In contrast, our method focuses on \emph{worst-case} minimization.

\textbf{FoE}: A linear program for ranking assignments with Birkhoff Von Neumann decomposition~\cite{lewandowski1986algorithmic} is solved to ensure fairness of exposure (FoE)~\cite{singh2018fairness}. The quality of rankings is maximized, with the constraint that the cumulative attention to relevance ratio is the same for all individuals. We re-rank only top-k individuals in each ranking. The original ranking is returned if solution is infeasible.\looseness=-1

\textbf{FIGR}~\cite{gorantla2021problem}: This method jointly aims to reduce ``underranking" (which is closely related to individual fairness) in rankings that are post-processed with group fairness constraints. Unlike the other baselines, this is a proportion-based re-ranker for each ranking, and does not explicitly consider attention distributions. Thus, we present results for this baseline in the Appendix. 

\vspace{-0.5em}
\subsection{Hyperparameter Tuning}
We stratified all datasets into two subsets: 50\% tuning and 50\% test sets, so no individuals or queries are present in both splits. All parameters (e.g., $\theta$; when tuned) are tuned using the tuning split. For \texttt{FairTREC 2021}, we use the full evaluation split, and do not perform any additional tuning -- we sample queries with replacement thrice to obtain variance. We run all optimization algorithms on a 3.2 GHz CPU with 16 GB RAM for $\leq$ 60 minutes, with a feasibility tolerance of $1e-9$. We set K=10 while measuring ranking quality and assume logarithmic discounts in attention till K=10 and zero otherwise.

We also pre-filter~\cite{biega2018equity}, and only re-rank the top-k individuals in each ranking. For moment-based divergences, $L_1$ and $L_2^{var}$,  we minimize maximum divergence only among the top-k at each step, as we found that this performs better. This means that even when the maximum divergence measure across all individuals cannot be reduced, we still re-rank to reduce the next possible highest divergence value. For $W_1$, we minimize divergence across all individuals. For the FoE baseline, constraints are set only for individuals in the top-k positions to make re-ranking feasible. Note that our results are sensitive to these pre-filtering choices. Post-tuning, we find that $k=50$ works well across datasets.

Our experimental flow is as follows: first, we implement our fair ranking definitions (DistFaiR) and compare to baselines. Second, we test if fairness metrics are affected by query polarity. Third, we perform several ablations for, e.g., the fairwashing effect.

\begin{figure*}[ht!]
\centering
\begin{subfigure}
    \centering
    \includegraphics[width=0.31\textwidth,trim={0 0.25cm 0 0},clip]{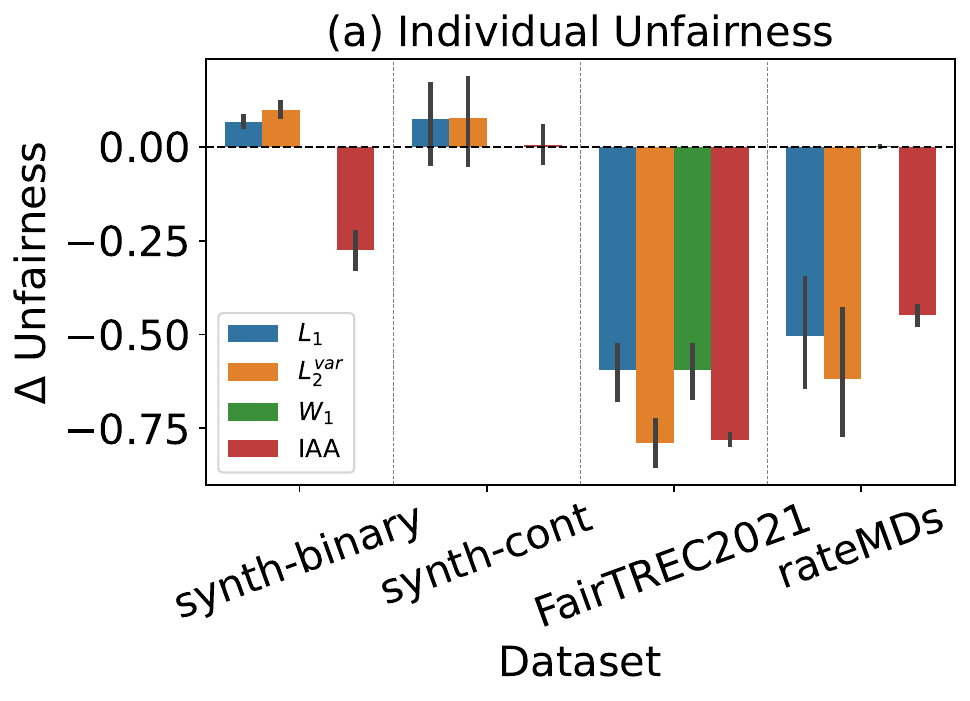}
\end{subfigure}%
\begin{subfigure}
    \centering
    \includegraphics[width=0.31\textwidth,trim={0 0.25cm 0 0},clip]{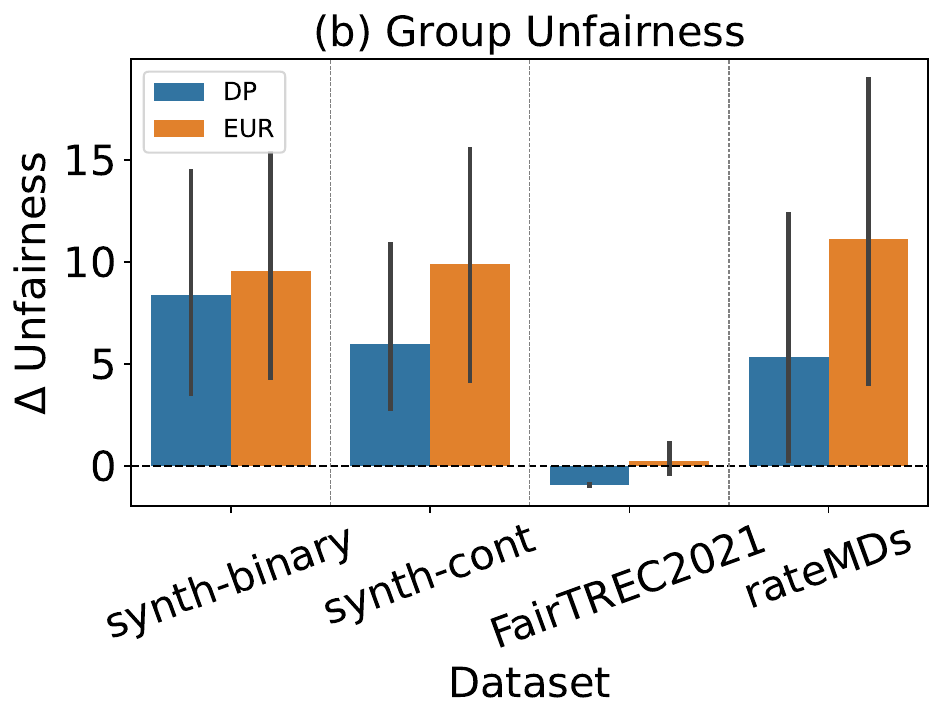}
\end{subfigure}%
\begin{subfigure}
    \centering
    \includegraphics[width=0.31\textwidth,trim={0 0.25cm 0 0},clip]{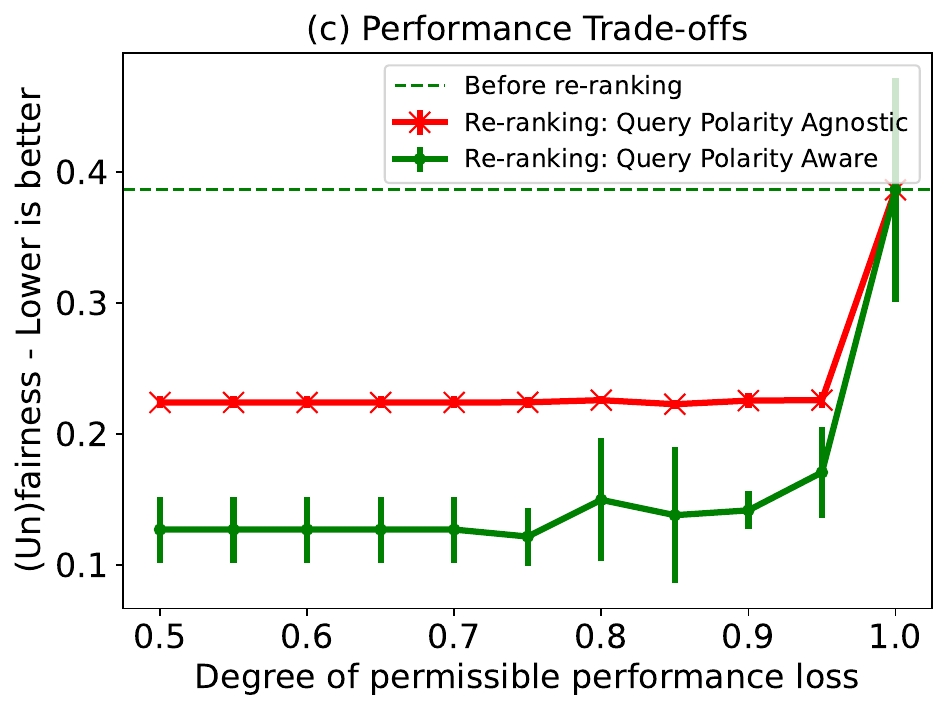}
\end{subfigure}
\caption{(a) and (b) show the difference (as relative change) between fairness metrics measured with and without query polarity. Query polarity impacts all amortized fairness metrics, as they differ from zero as seen in the plots. (rightmost) We plot the re-ranking performance of polarity agnostic and aware re-rankings under different permissible performance loss changes for the \texttt{synth-cont} dataset (DistFaiR($L_1$)), where we can see polarity agnostic re-ranking underperforms polarity aware re-ranking. %
}\label{fig:varying_theta}

\end{figure*}
\vspace{-0.5em}
\section{Results}

We measure the percentage change in unfairness pre- and post- re-ranking. A positive change -- decrease in unfairness -- is desired.

\paragraph{DistFaiR Improves Worst-Case  Fairness}
Table ~\ref{tab:full_results_main} shows that our re-rankings reduce individual unfairness, when unfairness is measured as the worst-case divergence measure between the attention and relevance distributions across individuals. We find that DistFaiR outperforms or performs on par with IAA. FIGR (Table~\ref{tab:figr})---which solves a different notion of ``underranking"---does not improve performance as measured by our metrics. Further, as expected, optimizing the divergence measure itself often leads to highest decrease in unfairness (for example, DistFaiR($W_1$) has highest improvement in fairness for the $\Delta$DistFaiR($W_1$) individual fairness measurement). Note that not all differences were statistically significant.

Additionally, as seen in Appendix Table~\ref{tab:group-re-ranking},  DistFaiR underperforms IAA-based re-ranking on the IAA metric. This makes sense because DistFaiR focuses on reducing worst-case divergence, while IAA focuses on the average across individuals. Thus, there appear to be tradeoffs between average and worst-case performance. Such observations have also been made in other fairness contexts~\cite{yang2023change}. %

\paragraph{Divergence Metric is an Important Design Choice.}
Our results show that the divergence metric is an important design choice. We find that the performance of $L_1$ and $L_2^{\text{var}}$ are close  (e.g., on \texttt{FairTREC2021}). We hypothesize that the  optimization with $W_1$ is more difficult, due to which performance improvements are smaller. Note that $L_2^{\text{var}}$ is the $W_2$ solution under assumptions of gaussianity. It is possible that using the $L_2^{\text{var}}$ measure could be an easier objective, but we can remove the distribution assumption for the general $W_2$.\looseness=-1

\paragraph{Individual Fairness Not Always at Odds with Group Fairness.}
Reducing individual unfairness under DistFaiR also reduces group unfairness in most cases (Table~\ref{tab:full_results_main}), as averaged across test splits, even without imposing group-level constraints.  While group unfairness does increase in some cases, the degree of change cannot exceed a specific limit (upto individual unfairness) as per our theoretical findings.  
We also see similar trends on a standard group fairness metric,  EUR~\cite{morik2020controlling} (see Table~\ref{tab:group-re-ranking} in Appendix). Interestingly, the IAA baseline almost always improves group fairness, though  DistFaiR reduces unfairness to a higher degree on two datasets. We also observe higher variance for group unfairness, potentially due to multiple solutions with same individual but different group unfairness.\looseness=-1

\paragraph{Online vs Offline Optimization.}
We observe that fully offline optimization reduces unfairness equally or more effectively than fully online optimization (Appendix Figure~\ref{fig:online_offline_optimization}). Thus, even if the full set of queries is not known apriori, partial offline optimization could be useful when a subset of queries is available. Experimentally, variance in online fairness is lower when optimizing for divergences beyond mean-based differences (\texttt{rateMDs} dataset; Figure~\ref{fig:online_fairness}) over time.\looseness=-1

\paragraph{Fairness Metrics are Sensitive to Query Polarity.}
\label{sec:query_polarity}
In Figure~\ref{fig:varying_theta} (a) and (b), we compute the relative change between fairness metrics measured with and without query polarity, averaged across tuning splits. We observe that all fairness metrics, for both individual and group fairness, are sensitive to query polarity. When the relative change is positive, this indicates fairwashing: rankings seem more fair than they actually are.
We observe that fairwashing occurs, especially for group fairness metrics. Thus, if one relies on the query polarity agnostic metrics, conclusions regarding the (un)fairness of the rankings would be incorrect. It may also be important to consider divergence measures beyond difference in means to avoid systematic under-ranking of a specific group across queries.\looseness=-1

\paragraph{Ranking Quality and Fairness Tradeoff.}
We study the variation in fairness across thresholds of allowable ranking quality loss ($\theta$) in the ILP optimization. Lower unfairness is observed at lower $\theta$ for the polarity-aware re-ranking  (Figure~\ref{fig:varying_theta} (c)), indicating a ranking quality and fairness tradeoff. Additionally, polarity agnostic re-ranking performance leads to higher (worse) unfairness than when query polarity is used. This matches our discussion that fairness metrics are sensitive to query polarity. Experimentally, higher standard deviation is observed in polarity-aware ranking. We also observe similar trends on the \texttt{rateMDs} dataset (Appendix Figure~\ref{fig:tradeoffs_ratemd}).

Importantly, in many real-world applications, different queries may have multiple differing real-world properties beyond polarity.
Accordingly, we can generalize our distribution-aware fairness definition to allow multiple query properties as a vector, where multiple properties form a multi-dimensional distribution. Initial results with this setup for the synthetic datasets are in the Appendix~\ref{sec:multiple_properties}.

\vspace{-0.7em}

\section{Conclusions}
In this paper, we propose a new distribution-aware divergence-based metric, DistFaiR, for amortized fairness measurement. We identify metrics under DistFaiR with the useful property that group unfairness is upper bounded by individual unfairness. We show that we can reduce individual and group unfairness under DistFaiR for different choices of divergence measures. We emphasize query polarity as a crucial yet overlooked aspect in fair-ranking literature, noting that neglecting polarity can result in fairwashing. We also empirically demonstrate fairwashing effects due to a lack of query polarity consideration and propose/evaluate a method to mitigate this effect. 

Our work has some limitations. For example, we assume a position bias model of attention. However, we note that this assumption can be relaxed to consider more complex user attention patterns under our framework, with some modifications made to the cumulative attention formulation. We also make normative assumptions that the distribution of attention should be close to that of relevance. However, a different link function may be more appropriate~\cite{saito2022fair}. Additionally, scores allotted to minority groups may be under-estimates of their true value~\cite{pierson2021algorithmic,krieg2022perceived} and may need to be pre-processed ~\cite{liao2023social}.  Importantly, there may not be purely technical fixes for operationalizing real-world fair ranking~\cite{gichoya2021equity}. Our approach, we believe, is a step towards reducing the scale of such issues.

\bibliographystyle{plain}
\balance
\bibliography{sample-base}

\appendix

\section{Glossary}
\begin{table}[!htp]
\centering
\adjustbox{max width=\linewidth}{%
\begin{tabular}{lrr}\toprule
\textbf{Term} &\textbf{Definition} \\\midrule
Attention &\cellcolor[HTML]{fffdfa}search exposure, or likelihood of being viewed on a search result \\
Relevance &\cellcolor[HTML]{fffdfa}score per individual per query, where a higher score indicates better rankability \\
Dataset &\cellcolor[HTML]{fffdfa}set of individuals, with relevance scores available per individual per query \\
Cumulative attention &attention over a sequence of queries \\
\bottomrule
\end{tabular}}
\caption{Glossary of terms utilized in the paper.}\label{tab:glossary}

\end{table}

\section{Information Loss with Expectation-only Approaches} \label{sec:dist_example}
We observe that critical information about distributions of attention may be missing from prior formulations e.g., as shown in Figure~\ref{fig:distribution_reliability}, where the mean attention may match the mean relevance scores for an individual, but the distributions may be very dissimilar. Consider two distributions $A$ and $R$ defined as follows

\[
A = \Ncal(0, \sigma^2) \quad \quad \text{ and } \quad \quad R = 0.5\Ncal(-\mu_R, \sigma^2) + 0.5\Ncal(\mu_R, \sigma^2),
\]

Also, define $$\Tilde{R} = \Ncal(0, \sigma^2)$$

Clearly $\mu_{A} - \mu_{R} = \mu_{A} - \mu_{\Tilde{R}} = 0$. However, the distribution between attention and relevance is clearly not the same. Then, for $\sigma_{\tilde{R}} = \sigma_A < \sigma_R$, attention is spread out less broadly across individuals as relevance. Thus, attention is much more concentrated for some individuals, while relevance is not concentrated within the same individuals. Importantly, fairness metrics that only consider means would consider this setting fair.

Then 
\begin{align}
D_{\text{KL}}(A \| R) = \frac{1}{2} \left[\frac{\sigma_A^2}{\sigma_R^2} + \frac{(\mu_R - \mu_A)^2}{\sigma_R^2} - 1 + \log \frac{\sigma_R^2}{\sigma_A^2}\right].
\end{align}

Clearly, $D_{\text{KL}}(A \| \Tilde{R}) = 0 < D_{\text{KL}}(A \| R)$, better measuring the discrepancy between cumulative attention and relevance, unlike mean distance measures in previous work.

\begin{figure}[htb!]
\centering
\begin{minipage}[htb!]
{0.4\textwidth}
\includegraphics[width=\textwidth]{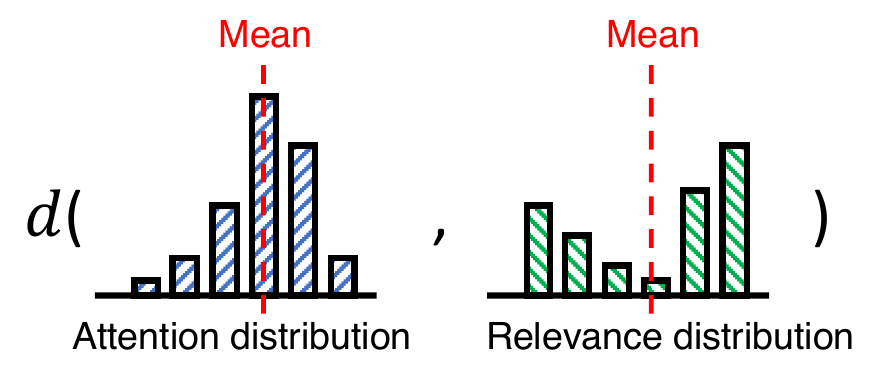}
\caption{Critical information about the distributions of relevance and attention (e.g., the variance) may be missing in such formulations.
}\label{fig:distribution_reliability}
\end{minipage}
\end{figure}

\section{Example of Fairwashing}

As shown in Figure~\ref{fig:fairwashing_amortized_ranking}, past formulations of amortized fair ranking may be prone to fairwashing.

\begin{figure}[htb!]
\centering
\begin{minipage}[htb!]
{0.5\textwidth}
\includegraphics[width=\textwidth,trim={1.5cm 11cm 0 5cm},clip]{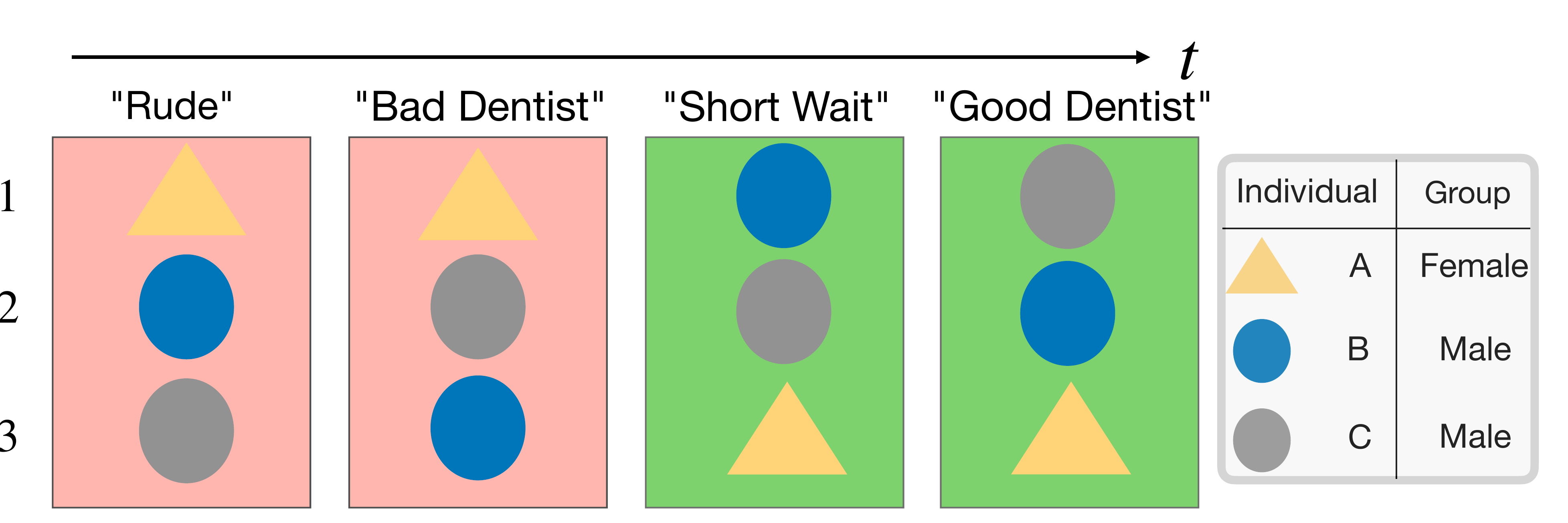}
\caption{\textbf{Past work in amortized fair ranking has ignored the impact of query polarity, and only considered expected cumulative attention}. 
Here, if all individuals are equally relevant, and expected attention scores for ranks 1,2,3 are $\{0.5,0.5,0\}$ respectively, the sequence of queries appear fair because an individual's expected attention accumulated over the four queries is proportional to their relevance. However, we the female doctor is allocated attention only for the queries with negative polarity (``rude",``bad dentist"). This leads to \emph{fairwashing}.
}\label{fig:fairwashing_amortized_ranking}
\end{minipage}
\end{figure}

\section{Datasets}
\label{sec:datasets}
We utilize four datasets in our experiments. In each dataset, relevance scores are normalized to form a distribution within a ranking. \texttt{fairtrec2021} is licensed under the CC BY-SA 3.0 license. \texttt{rateMDs} is released as a part of open-sourced research publication~\cite{thawani2019online}.

\subsection{Synthetic Hiring Datasets}
Two synthetic datasets are generated, to mimic the example shown in Figure~\ref{fig:fairwashing_amortized_ranking}. The sensitive attribute in each dataset is the sex of the individual being ranked. We generate two versions, one with two unique relevance scores, and one where the relevance is continuous. In the binary relevance dataset (\texttt{synth-binary}) the relevance for individuals is either 1.01 or 0.99, with male individuals having a relevance score of 1.01 for queries with positive polarity, and 0.99 for queries with negative polarity. This represents a nearly uniform relevance distribution. In the continuous dataset (\texttt{synth-cont}), the relevance in each group is sampled from a normal distribution, with mean and standard deviation $1$ and $0.2$ respectively for one group, and $1$ and $0.1$ for the other group per query (i.e., disparate uncertainty for groups)~\cite{rastogi2024fairness}. Each query in the dataset has a polarity value $\in \{-1,1\}$. Note that relevance scores are normalized within each query set before all re-ranking interventions.

\subsection{RateMDs}
We also utilize a healthcare dataset~\cite{thawani2019online} for ranking doctors corresponding to a text query. The sensitive attribute is sex. The original dataset contains reviews corresponding to 6197 profiles of doctors on \href{https://www.ratemds.com/}{RateMDs}. We consider each of these profiles to be a unique individual. We verified that the `name' metadata associated with each of these profiles is unique (including after lower-casing and removing punctuations).  Each ranking then corresponds to a text query such as ``short wait time". The ranking is produced by using a pre-trained LLM model\footnote{\url{https://huggingface.co/cross-encoder/ms-marco-TinyBERT-L-2}} to match the text to reviews of doctors, and order the doctors in decreasing order of ranking score. The text-match score is averaged across all reviews corresponding to a given doctor for each query. Additionally, each query is associated with a polarity $\in \{-1,1\}$, which is annotated based on the sentiment polarity of the query (positive sentiment: 1, negative sentiment: -1). Note that some queries were specific to a speciality: e.g., ``best dentist", and relevance scores were produced for all doctors in the dataset (i.e., including doctors who have a different specialty). Thus, our results produced are highly influenced by the LLM-driven scores. Ideally, the scores allocated to doctors from different specialties will be low. In practice, while we observed some variance in top-selected items for different but similar queries, we do observe that some profiles occupy top positions in several rankings. We leave experiments with varying how the relevance score is produced to future work.\looseness=-1 

\subsection{FairTREC2021}
In this dataset, the items correspond to Wikipedia articles, and the query is the corresponding article domain~\cite{trec-fair-ranking-2021}. We utilize the standard evaluation split\footnote{\url{https://ir-datasets.com/trec-fair-2021.html\#trec-fair-2021/eval}} for experiments, which is a benchmark dataset used in multi-query fair ranking evaluation. The sensitive attribute is geographic location(s) referred to in the data, all of which we categorize into one of three groups (Europe/North America, No continent, or Others) to avoid non-overlapping groups. The query polarity score is a continuous score $\in [-1,1]$, and is produced by a pre-trained sentiment classification model~\cite{barbieri-etal-2020-tweeteval}. Specifically, the polarity score is the sum of the sentiment polarity of the query -- where -1 denotes negative, 0 denotes neutral, and 1 denotes positive -- weighted by predicted probabilities of each polarity class. We use the evaluation split of this dataset for all experiments. Note that the corpus corresponding to the FairTREC2021 dataset contains over 6.3M documents. However, the majority of these documents are not relevant to even a single query in the evaluation set. Thus, we filter the ranking corpus to documents marked as being relevant to at least one query for re-ranking before all experiments. Note that this dataset selection step is a normative choice, made with the assumption that relevance scores are unbiased: i.e., documents not relevant to any query need not be re-ranked for any higher attention. When making similar decisions at scale, it might be important to assess if there is a high degree of under-ranking, e.g., by expanding the query and corpus.

\section{Individual vs Group Fairness}

\subsection{Tail Probability Bounds for Cumulative Attention and Relevance}
\begin{theorem}
Let $X_i^t \sim \text{Bernoulli}(p_i^t)$ and
\[
    X_i = \sum_{t\in \Tcal} X_i^t.
\]
The expected value of $X_i$ is given by:
\[
    \EE[X_i] = \sum_{t\in \Tcal} p_i^t.
\]
\vspace{1em}
Then, for any $\delta > 0$, we have the following:
\[
    \mathbb{P}\left(|X_i - \EE[X_i]| \geq \delta \EE[X_i]\right) \leq 2\exp\left(-\frac{\delta^2 \mathbb{E}[X_i]}{2 + \delta}\right).
\]
\end{theorem}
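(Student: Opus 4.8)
The plan is to recognize this as a standard multiplicative Chernoff bound for a sum of independent Bernoulli random variables and to assemble it from two one-sided tail estimates. First I would record the mean by linearity of expectation: since $X_i = \sum_{t \in \Tcal} X_i^t$ with each $X_i^t \sim \text{Bernoulli}(p_i^t)$, we immediately obtain $\EE[X_i] = \sum_{t \in \Tcal} p_i^t$, the quantity playing the role of $\mu$ throughout the argument. The independence of the $X_i^t$ across $t$, guaranteed by the paper's assumption that queries are independent, is what makes the moment generating function factorize, and this is the only structural fact the argument needs.

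For the upper tail I would apply the exponential Markov (Chernoff) trick: for any $s > 0$,
\[
\mathbb{P}(X_i \geq (1+\delta)\mu) \leq e^{-s(1+\delta)\mu}\,\EE[e^{sX_i}] = e^{-s(1+\delta)\mu}\prod_{t \in \Tcal}\EE[e^{sX_i^t}].
\]
Each factor equals $1 - p_i^t + p_i^t e^s$, which I bound using $1 + x \leq e^x$ to get $\EE[e^{sX_i}] \leq \exp\big(\mu(e^s - 1)\big)$. Optimizing the resulting expression over $s$ (the minimizer is $s = \ln(1+\delta)$) yields the classical form $\big(e^\delta/(1+\delta)^{1+\delta}\big)^{\mu}$. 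The lower tail is handled symmetrically with $s < 0$, giving $\big(e^{-\delta}/(1-\delta)^{1-\delta}\big)^{\mu}$ for $0 < \delta < 1$; for $\delta \geq 1$ the event $\{X_i \leq (1-\delta)\mu\}$ is contained in $\{X_i < 0\}$ and has probability zero, so the lower tail requires no separate treatment there.

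The main obstacle is the deterministic simplification converting these sharp but unwieldy expressions into the stated closed form. Concretely I need the one-variable inequality $\delta - (1+\delta)\ln(1+\delta) \leq -\delta^2/(2+\delta)$ for all $\delta > 0$, which upgrades the upper-tail estimate to $\exp(-\mu\delta^2/(2+\delta))$; this is proved by comparing derivatives (or via a truncated logarithm expansion) and is where essentially all the real work lies. For the lower tail the analogous simplification produces the even smaller $\exp(-\mu\delta^2/2)$, and since $\delta^2/2 \geq \delta^2/(2+\delta)$ this bound is dominated by the upper-tail bound. Finally I would combine the two one-sided estimates by a union bound,
\[
\mathbb{P}(|X_i - \mu| \geq \delta\mu) \leq \exp\!\Big(-\tfrac{\mu\delta^2}{2+\delta}\Big) + \exp\!\Big(-\tfrac{\mu\delta^2}{2}\Big) \leq 2\exp\!\Big(-\tfrac{\mu\delta^2}{2+\delta}\Big),
\]
which is precisely the claimed bound after substituting $\mu = \EE[X_i]$.
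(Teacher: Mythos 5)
Your proposal is correct and takes essentially the same route as the paper's proof: the paper likewise splits the two-sided deviation into the upper-tail bound $\exp\!\big(-\delta^2\mathbb{E}[X_i]/(2+\delta)\big)$ and the lower-tail bound $\exp\!\big(-\delta^2\mathbb{E}[X_i]/2\big)$ for a sum of independent Bernoullis and finishes with a union bound, exactly as you do. The only difference is depth rather than direction --- the paper invokes these one-sided Chernoff bounds as standard facts, whereas you derive them via the MGF argument and the inequality $\delta-(1+\delta)\ln(1+\delta)\le -\delta^2/(2+\delta)$; this is fine except for the negligible slip that at $\delta=1$ the lower-tail event is $\{X_i\le 0\}=\{X_i=0\}$, which has probability $\prod_{t}(1-p_i^t)\le e^{-\mathbb{E}[X_i]}$ rather than zero, so the claimed bound still holds there.
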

\begin{proof}
    Assume that $X_i^t$'s are independent for different $t$ and observe that the domain of random variable $X_i^t$ is $\{0, 1\}$, i.e., bounded and non-negative. Using Chernoff bounds for the sum of independent Bernoulli random variables we have upper and tail bounds, respectively:

    \[
        P(X_i \geq (1 + \delta)\EE[X_i]) \leq \exp\left(-\frac{\delta^2 \EE[X_i] }{2 + \delta}\right),
    \]
    \[
        P(X_i \leq (1 - \delta)\EE[X_i]) \leq \exp\left(-\frac{\delta^2 \EE[X_i]}{2}\right).
    \]
    Applying a union bound for both the upper and lower tails, we have:
    \[
        P\left(|X_i - \EE[X_i] | \geq \delta \EE[X_i]\right) \leq 2 \exp\left(-\frac{\delta^2 \EE[X_i]}{2 + \delta}\right).
\]
\end{proof}
\subsection{Proof of Lemmas}
\begin{lemma}
Define the following:
    \begin{align*}
        D_{L_1}(P \| Q) &= |\mu_P - \mu_Q|
    \end{align*}

    $D_{L_1}$ satisfies definition \ref{def:divergence} for $P$ and $Q$ when $\mu_P$ and $\mu_Q$ are sufficient statistics for their respective distributions. 
\end{lemma}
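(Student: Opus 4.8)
The plan is to verify the two defining properties of a divergence from Definition \ref{def:divergence} directly, treating non-negativity as immediate and devoting essentially all of the work to the positivity (identity-of-indiscernibles) property, since that is the only place where the sufficient-statistic hypothesis is actually needed.

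First I would dispatch non-negativity. Because $D_{L_1}(P \| Q) = |\mu_P - \mu_Q|$ is an absolute value, it is non-negative for any $P$ and $Q$ with finite means, and no assumptions are required here.

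Next, for positivity I would prove the two implications of the biconditional separately. The forward direction is trivial: if $P = Q$ then $\mu_P = \mu_Q$, so $|\mu_P - \mu_Q| = 0$. The reverse direction is the crux, and it is exactly where the hypothesis that $\mu_P$ and $\mu_Q$ are sufficient statistics for their respective distributions enters. I would observe that $D_{L_1}(P \| Q) = 0$ forces $\mu_P = \mu_Q$, and then invoke sufficiency: since the mean is a sufficient statistic parametrizing the family to which $P$ and $Q$ belong, the distribution is uniquely determined by its mean, and so equality of means yields $P = Q$. I would make this concrete with the two families the paper already names — Gaussians with a common fixed variance, and Exponentials whose rate is reciprocal to the mean — where equal means manifestly pin down the entire distribution.

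The main obstacle is conceptual rather than computational: making precise the sense in which ``the mean is a sufficient statistic'' delivers the identifiability the reverse implication needs. The cleanest framing is to assume $P$ and $Q$ lie in a common parametric family indexed so that the mean is a bijective reparametrization of the parameter; then equal means imply equal parameters, hence $P = Q$. I would flag explicitly that without this assumption $D_{L_1}$ fails positivity, since two distributions with equal means but differing higher moments give $D_{L_1} = 0$ while $P \neq Q$ — precisely the failure mode motivating the paper's richer divergences — so this caveat is worth stating rather than concealing.
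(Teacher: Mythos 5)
Your proof is correct and follows essentially the same route as the paper's: non-negativity falls out of the absolute value, and positivity is obtained by invoking sufficiency of the means to upgrade $\mu_P = \mu_Q$ to $P = Q$ (the converse being trivial). Your explicit framing of the sufficiency hypothesis as a bijective reparametrization within a common parametric family, together with the caveat that positivity fails without it, makes precise what the paper leaves implicit; the only difference is that the paper's own proof of this lemma additionally verifies subadditivity under convolution, scaling under averages, and positive homogeneity (properties later needed for Theorem~\ref{theorem:indiv_group}), which the statement as given does not ask for.
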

\begin{proof}
We prove that $D_{L_1}(P \| Q) = |\mu_P - \mu_Q|$ satisfy the following properties:

\paragraph{1. Non-negativity:} \\
For $D_{L_1}(P \| Q)$, the expressions involve absolute values, which are non-negative by definition. Thus,
\begin{align}
&D_{L_1}(P \| Q) = |\mu_P - \mu_Q| \geq 0.
\end{align}

\paragraph{2. Positivity:} \\
For $D_{L_1}(P \| Q) = |\mu_P - \mu_Q|$, we have $D_{L_1}(P \| Q) = 0$ if and only if $\mu_P = \mu_Q$. Since $\mu_P$ and $\mu_Q$ are sufficient statistics, $\mu_P = \mu_Q$ implies $P = Q$, and conversely, if $P = Q$, then $\mu_P = \mu_Q$.

\paragraph{3. Subadditivity:}
For \( D_{L_1}(P \| Q) \), we verify subadditivity under convolution in terms of the mean-difference divergence:
\[
D_{L_1}(P_1 \circ P_2 \| R_1 \circ R_2) \leq D_{L_1}(P_1 \| R_1 ) + D_{L_1}(P_2 \| R_2).
\]
Since expectation is linear under convolution:
\[
\mu_{P_1 \circ P_2} = \mu_{P_1} + \mu_{P_2},  \quad \mu_{R_1 \circ R_2} = \mu_{R_1} + \mu_{R_2},
\]
the mean-difference divergence simplifies to:
\[
|\mu_{P_1 \circ P_2} - \mu_{R_1 \circ R_2}|
= |(\mu_{P_1} + \mu_{P_2}) - (\mu_{R_1} + \mu_{R_2})|.
\]
Rewriting,
\[
|\mu_{P_1} - \mu_{R_1} + \mu_{P_2} - \mu_{R_2}|
\]
Applying the triangle inequality,
\[
|\mu_{P_1} - \mu_{R_1} + \mu_{P_2} - \mu_{R_2}| \leq |\mu_{P_1} - \mu_{R_1}| + |\mu_{P_2} - \mu_{R_2}|.
\]
Thus, subadditivity holds:
\[
D_{L_1}(P_1 \circ P_2 \| R_1 \circ R_2) \leq D_{L_1}(P_1  \| R_2) + D_{L_1}(P_2 \| R_2).
\]

\paragraph{4. Scaling over averages:}
For $D_{L_1}(P \| Q)$, scaling over averages refers to how the divergence behaves when comparing averages (means) of distributions.

\[
\left| \frac{\mu_{P_1} + \mu_{P_2}}{2} - \frac{\mu_{Q_1} + \mu_{Q_2}}{2} \right| \leq \frac{|\mu_{P_1} - \mu_{Q_1}| + |\mu_{P_2} - \mu_{Q_2}|}{2}
\]
Again, this holds due to the triangle inequality for absolute values.

\paragraph{5. Positive homogeneity with degree one:}
For $D_{L_1}(P \| Q)$, $\alpha > 0$:
\[
D_{L_1}(\alpha P \| \alpha Q) = |\alpha \mu_P - \alpha \mu_Q| = \alpha |\mu_P - \mu_Q| = \alpha D_{L_1}(P \| Q),
\] where here scaling denotes scaling the random variables corresponding to the distribution.
Thus, positive homogeneity holds.

\end{proof}

\subsection{Individual Fairness Upper-Bounds Group Fairness}
\label{ref:sec_proof}
\begin{theorem}
    For any jointly convex DistFaiR divergence that is subadditive under the convolution operation, positively homogeneous with degree $s$, and scales under averages, amortized group fairness is upper-bounded by amortized individual fairness. Specifically, we have the following inequality:
    \begin{align}
        \max_{g_k \in \Gcal} D(A_{g_k} \| R_{g_k}) \leq   
 \max_{i \in \Dcal} D(A_i \| R_i) \hspace{0.5cm},
    \end{align}
where $A$ and $R$ are distributions that denote attention and relevance, respectively, individuals $i\in \{1, \ldots, n\}$, and $g_k$ denotes the set of individuals $i$ that belong to group $k$.
\end{theorem}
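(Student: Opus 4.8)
The plan is to reduce the group divergence to a combination of individual divergences, exploiting that each group quantity is by construction an \emph{average} of independent individual random variables, and then to bound that combination by its maximum.

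First, I would unfold the group-level definitions. Since $X_{g_k}^{t} = \frac{1}{|g_k|}\sum_{i \in g_k} X_i^{t}$, summing over $t \in \Tcal$ gives $X_{g_k} = \frac{1}{|g_k|}\sum_{i \in g_k} X_i$, and likewise $Y_{g_k} = \frac{1}{|g_k|}\sum_{i \in g_k} Y_i$. Because the attention (resp.\ relevance) variables of distinct individuals are independent by assumption, the cumulative group distribution $A_{g_k}$ is the $\tfrac{1}{|g_k|}$-scaling of the $|g_k|$-fold convolution of the individual distributions $\{A_i\}_{i \in g_k}$, and similarly for $R_{g_k}$. This identification is the conceptual crux: the group distribution is a scaled convolution of the individual distributions, not a mixture of them.

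Second, I would peel the scaling off with positive homogeneity and collapse the convolution with subadditivity. Writing $m = |g_k|$ and applying pairwise subadditivity inductively across the $m$ convolved factors yields the chain
\[
D(A_{g_k} \| R_{g_k}) = \frac{1}{m^{s}}\, D\!\left(A_{i_1}\circ\cdots\circ A_{i_m} \,\middle\|\, R_{i_1}\circ\cdots\circ R_{i_m}\right) \leq \frac{1}{m^{s}} \sum_{i \in g_k} D(A_i \| R_i),
\]
which is exactly the ``scales under averages'' statement when $s=1$ (so one may instead invoke that hypothesis directly). Since each summand is at most $\max_{i \in \Dcal} D(A_i \| R_i)$ and there are $m$ of them, the right-hand side is at most $m^{1-s}\max_{i \in \Dcal} D(A_i \| R_i)$; using $s \geq 1$ and $m \geq 1$ gives $m^{1-s} \leq 1$, hence $D(A_{g_k}\|R_{g_k}) \leq \max_{i \in \Dcal} D(A_i\|R_i)$. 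Taking the maximum over $g_k \in \Gcal$ on the left (the right-hand side does not depend on $g_k$) closes the argument.

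The main obstacle is the first step: correctly modelling $A_{g_k}$ as a scaled convolution rather than a mixture. For mean-only divergences the two coincide, but for divergences sensitive to higher moments (variance, as in $L_2^{\mathrm{var}}/W_2$) the variance of an average of independent variables differs from that of a mixture, so it is essential that I invoke independence across individuals together with convolution-subadditivity rather than joint convexity (the latter instead governs the mixture view and can be used to justify the structural hypotheses). A secondary point to verify is the homogeneity degree: the argument requires $s \geq 1$, which holds for the divergences of interest ($s=1$ for $L_1$ and $W_1$, $s=2$ for $W_2$), ensuring the factor $m^{1-s}$ never inflates the bound.
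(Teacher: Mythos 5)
Your proposal is correct and follows essentially the same route as the paper's proof: identify the group distribution as a scaled convolution of independent individual distributions, collapse the convolution via subadditivity, peel off the $\tfrac{1}{|g_k|}$ factor via positive homogeneity of degree $s$, bound the resulting average by the maximum, and take the max over groups. Your explicit observations that $s \geq 1$ is needed so that $|g_k|^{1-s} \leq 1$, and that joint convexity plays no role in the argument (the convolution view, not the mixture view, is what matters), are sharper than the paper's own presentation, which only remarks that $|g_k| \geq 1$ is required --- but these are refinements of the same proof, not a different one.
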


\begin{proof}
\newcommand{\Xgk}{X_{g_{k}}}
\newcommand{\Ygk}{Y_{g_{k}}}
\newcommand{\sumXgk}{\frac{1}{|g_k|}\sum_{i\in g_k} X_{i}}
\newcommand{\sumYgk}{\frac{1}{|g_k|}\sum_{i\in g_k} Y_{i}}
Let $A_i, R_i$ denote the distributions of random variables $X_i, Y_i$, respectively. 

{\bf Assume $D$ is subadditive, positively homogeneous, and scales under averages for transformations to the random variable corresponding to the distributions in each case.}

Denote 
\begin{equation}
    X_{g_k} = \frac{1}{|g_k|}\sum_{i\in g_k} X_i \quad \text{ and } \quad Y_{g_k} = \frac{1}{|g_k|}\sum_{i\in g_k} Y_i,
\end{equation}

such that, by scaling property of $D$,

\begin{equation}
    X_{g_k} \sim A_{g_k} \quad \text{ and } \quad Y_{g_k} \sim R_{g_k}.
\end{equation}

Denote $$X'_i = \frac{1}{|g_k|}X_i \quad \text{ and } \quad Y'_i = \frac{1}{|g_k|}Y_i$$ s.t. $$X'_i \sim A'_i \quad \text{ and } \quad Y'_i \sim R'_i .$$

$A_{g_k} = A'_1 \circ A'_2 \circ \ldots \circ A'_{|g_k|}$ and $R'_{g_k} = R'_1 \circ R'_2 \circ \ldots \circ R'_{|g_k|}$, where $\circ$ denotes convolution. Recall that $X_i$'s and $Y_i$'s are independent.

\begin{align}
    D(A_{g_k}  \| R_{g_k}) &\le \sum_{i\in g_k} D(A'_i \| R'_i) \label{eq:subadditivity} \\
    &= \frac{1}{|g_k|^s}\sum_{i\in g_k} D(A_i \| R_i) \label{eq:homogeneity} \\
    &\le \frac{1}{|g_k|^{s-1}} \max_{i\in g_k} D(A_i \| R_i) \\
    &\le \max_{i\in \Dcal} D(A_i \| R_i) \label{eq:assumption_groups},
\end{align}

where Equation \ref{eq:subadditivity} is a result of subadditivity and Equation \ref{eq:homogeneity} is a result of positive homogeneity~\cite{kanamori2014scale} with degree $s$. Specifically, subadditivity of a divergence under convolution refers to divergences $D$ satisfying the property:
\[
D(P_1 \circ P_2 \| Q_1 \circ Q_2) \leq D(P_1 \| Q_1 ) + D(P_2 \| Q_2).
\] for some distributions $P_1, P_2, Q_1, Q_2$.

Taking the max over all groups,
\begin{equation}
    max_{g_k \in \Gcal} D\big(A_{g_k} \| R_{g_k}\big) \le \max_{i \in \Dcal} D(A_i \| R_i),
\end{equation}
completes the proof. Note that Equation \ref{eq:assumption_groups} holds when $|g_k| \geq 1$, which is an underlying assumption in the fairness measurement. We experimentally validate this theorem in Figure~\ref{fig:indiv_group}.

Note that this bound becomes especially pertinent when there are a large number of groups, and the ranges of group and individual fairness are similar.

\end{proof}

\subsection{Tail Probability Bounds for Polarity-Aware Cumulative Attention and Relevance}
\begin{theorem}
Let $X_i^t \sim \text{Bernoulli}(p_i^t)$ and $\eta(q_t) \in [a_t, b_t]$; $a_t, b_t \in \RR$. With a slight abuse of notation, let $\Tilde{X}_i^t = X_i^t\cdot \eta(q_t) \in [a_t, b_t]$ and
\[
    \Tilde{X}_i = \sum_{t\in \Tcal} \Tilde{X}_i^t,
\]
The expected value of $\Tilde{X}_i$ is given by:
\[
    \mathbb{E}[\Tilde{X}_i] = \sum_{t\in \Tcal} \eta(q_t) \cdot p_i^t.
\]
Then, for any $\delta > 0$, we have the following:
\[
    P\left(|\Tilde{X}_i - \EE[\Tilde{X}_i]| \geq \delta\right) \leq 2\exp\left(-\frac{2\delta^2}{\sum_{t \in \Tcal} (b_t - a_t)^2}\right).
\]
\end{theorem}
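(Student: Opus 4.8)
The plan is to recognize this statement as a direct application of Hoeffding's inequality for sums of bounded independent random variables, mirroring the structure of Theorem~\ref{theorem:chernoff} but with the Chernoff bound replaced by Hoeffding's. First I would establish the claimed expectation. Since the polarity $\eta(q_t)$ is a fixed scalar determined by the query at time $t$, linearity of expectation gives $\EE[\Tilde{X}_i^t] = \eta(q_t)\EE[X_i^t] = \eta(q_t) p_i^t$, and summing over $t \in \Tcal$ yields $\EE[\Tilde{X}_i] = \sum_{t \in \Tcal} \eta(q_t) p_i^t$ as stated.

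Next I would verify the two hypotheses required by Hoeffding's inequality. The summands $\Tilde{X}_i^t$ are independent across $t$: this follows from the assumption that the attention variables $X_i^t$ are independent across time steps (equivalently, that queries are independent), combined with the fact that each $\eta(q_t)$ is deterministic. Boundedness is furnished by the hypothesis $\Tilde{X}_i^t \in [a_t, b_t]$; concretely, since $X_i^t \in \{0,1\}$ the product $\Tilde{X}_i^t$ takes only the value $0$ or $\eta(q_t)$, both of which lie in $[a_t, b_t]$ under the stated convention.

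With independence and boundedness in hand, I would invoke the two-sided Hoeffding bound: for a sum $S = \sum_{t} Z_t$ of independent variables with $Z_t \in [a_t, b_t]$, one has $P(|S - \EE[S]| \geq \delta) \leq 2\exp(-2\delta^2 / \sum_{t \in \Tcal} (b_t - a_t)^2)$. Setting $S = \Tilde{X}_i$ and $Z_t = \Tilde{X}_i^t$ gives exactly the claimed inequality, and the identical argument applies verbatim to the polarity-aware cumulative relevance $\Tilde{Y}_i$.

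This argument is largely routine; the only point requiring care is the ``slight abuse of notation'' around the interval $[a_t, b_t]$, namely ensuring that the range of each summand is correctly captured so that the denominator $\sum_{t \in \Tcal}(b_t - a_t)^2$ is a valid bound on the total squared range. The conceptual distinction from Theorem~\ref{theorem:chernoff} is the switch from a Chernoff bound specialized to Bernoulli sums to Hoeffding's bound for general bounded summands, which is precisely why the resulting tail bound now depends on the number of queries through $\sum_{t \in \Tcal}(b_t - a_t)^2$ rather than on $\EE[\Tilde{X}_i]$ alone.
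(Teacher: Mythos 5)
Your proposal is correct and follows essentially the same route as the paper's proof: both arguments verify independence and boundedness of the summands $\Tilde{X}_i^t \in [a_t, b_t]$ and then apply Hoeffding's inequality (the paper states the two one-sided bounds and combines them with a union bound, which is just the unpacked form of the two-sided bound you invoke directly). Your additional verification of the expectation formula and of the hypotheses is fine but does not change the substance of the argument.
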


\begin{proof}
Assume that $\Tilde{X}_i^t$'s are independent for different $t$ and observe that each $\Tilde{X}_i^t \in [a_t, b_t]$, i.e., bounded. Using Hoeffding's inequality for the sum of independent bounded random variables, we have:

\[
    P\left( \Tilde{X}_i \geq \mathbb{E}[\Tilde{X}_i] + \delta \right) \leq \exp\left(-\frac{2\delta^2}{\sum_{t \in \mathcal{T}} (b_t - a_t)^2}\right),
\]
\[
    P\left( \Tilde{X}_i \leq \mathbb{E}[\Tilde{X}_i] - \delta \right) \leq \exp\left(-\frac{2\delta^2}{\sum_{t \in \mathcal{T}} (b_t - a_t)^2}\right).
\]

By applying a union bound for the upper and lower tails, we get:

\[
    P\left( |\Tilde{X}_i - \mathbb{E}[\Tilde{X}_i]| \geq \delta \right) \leq 2 \exp\left( -\frac{2\delta^2}{\sum_{t \in \mathcal{T}} (b_t - a_t)^2} \right).
\]
\end{proof}

\begin{table*}[
htb!]
\centering
\caption{We show \emph{relative improvement} in fairness post-fair ranking intervention with respect to the original ranking, all with the FIGR baseline. The columns correspond to different fairness measures, while each row corresponds to a dataset.}
\label{tab:figr}
\adjustbox{max width=0.8\linewidth}{%
 \begin{tabular}{llcccccc}
\toprule
Dataset & Method  & \multicolumn{3}{c}{Relative Change in Individual Fairness ($\uparrow$)}  &\multicolumn{3}{c}{Relative Change in Group Fairness ($\uparrow$)} \\
\cmidrule(r){3-5}\cmidrule(r){6-8}\\
 &  & $\Delta$ DistFaiR ($L_1$) & $\Delta$ DistFaiR ($L_2^{var}$) & $\Delta$ DistFaiR ($W_1$)  & $\Delta$ DistFaiR ($L_1$) & $\Delta$ DistFaiR ($L_2^{var}$) & $\Delta$ DistFaiR ($W_1$)  \\

\midrule
\multirow{1}{*}{\texttt{synth-binary}} 
  & FIGR & 0.00\% & 0.00\% & 0.00\% & 0.00\% & 0.00\% & 0.00\%\\
 
\midrule

\multirow{1}{*}{\texttt{synth-cont}}  
& FIGR & 0.00\% & 0.00\% & 0.00\% & 0.00\% & 0.00\% & 0.00\%\\
\midrule

\multirow{1}{*}{\texttt{FairTREC2021}}  
  & FIGR & -3.37\% & -6.18\% & -3.37\% & -164.04\% & -417.48\% & -147.08\%\\

\midrule
\multirow{1}{*}{\texttt{rateMDs}}   
& FIGR & 0.00\% & 0.00\% & 0.00\% & -4.64\% & -13.20\% & 7.99\%\\

\bottomrule
\end{tabular}}
\end{table*}

\section{Xing Dataset}
\label{app:xing_dataset}
The Xing dataset~\cite{zehlike2017fa} contains top-ranked candidates in response to 57 queries submitted on \href{http://xing.com}{Xing}, all in the context of online hiring. Candidate information is anonymized. All candidates are categorized into two groups (male and female) based on their sex. Note that unique IDs are not available for each individual, as all candidate names, pictures etc. were either ``removed or obfuscated by hashing"~\cite{zehlike2017fa}. Further, relevance scores are not available for any candidate or query.  

To use this dataset in our study, we make some assumptions. First, we assume that the top-40 individuals in each ranking are distinct. This leads to a dataset of 57 queries and 2236 individuals. To illustrate the impact of polarity, we also assume that the four specific queries related to software engineering -- `Software Engineer',`Back end Developer',`Front End Developer',`Application Developer' -- have a polarity of one, and the rest have a polarity of zero. This simulates a setup where e.g., the recruiter is searching specifically to hire a software engineer. Since there are no relevance scores in this dataset, we assume that male and female candidates are equally relevant (i.e., equally worthy of attention).

Since an individual, under this setting, can only appear once (i.e., corresponding to a single query), amortized individual fairness is less meaningful. Thus, we only measure group fairness. We measure different group fairness measures including DistFaiR on the full dataset. In Table~\ref{tab:xing_dataset}, we measure proportion of change in (un)fairness when polarity is considered with respect to the polarity agnostic fairness measurement, randomly sampling queries with replacement several times.   Similar to results in the main text, a positive value indicates that unfairness is higher when polarity is considered, or that the rankings were actually more unfair than they appeared. Thus, we find that most group fairness metrics are sensitive to query polarity, and vary across distance measures on the Xing dataset as well.  

\begin{table}[!htp]\centering
\begin{tabular}{c|rr}
\toprule
\textbf{Metric} &\textbf{Group Fairwashing} \\\midrule
$L_1$ &0.16 \\
$L_2^{\text{var}}$ &0.40 \\
$W_1$ &-0.90 \\
DP &0.16 \\
EUR &15.20 \\
IAA &0.16 \\
\bottomrule
\end{tabular}
\caption{Impact of query polarity on fairness measurement for the Xing dataset. Results are averaged over 10 sets, each a random sample of queries with replacement.}\label{tab:xing_dataset}

\end{table}

\section{Integer Linear Programming Formulation}
\label{sec:ilp_appendix}
We rely on the Gurobi software~\cite{achterberg2019s} for each optimization. Within the solver, the optimality gap~\cite{bixby2007progress} is measured during each optimization search iteration. The Gurobi solver also uses routines of branch-and-bound~\cite{wolsey1999integer}, presolve~\cite{achterberg2020presolve}, cutting planes~\cite{marchand2002cutting}, etc. We emphasize that our framework - DistFaiR – can also be applied to continuously relaxed, more computationally efficient formulations. We highlight this as an important direction of future work. In implementation, we introduce an auxilary variable upper-bounding unfairness across all individuals (so worst-case unfairness), which is then minimized. 

Note that we ignore ties in system relevance scores in all cases. We verified that variations in DCG@K on accounting vs not accounting for tied relevance scores are very small (less than 0.0001). In implementation, we also pre-filter and re-rank only top-k individuals in each ranking. Thus,  our results are sensitive to this pre-filtering choice. As also mentioned in prior work~\cite{biega2018equity}, different pre-filtering choices may lead to different degrees of fairness gains. We leave the exploration of different pre-filtering choices to future work.

\section{Individual Fairness Bounds Group Fairness under DistFaiR.}
\label{sec:fairness_bounds}
Theorem~\ref{theorem:indiv_group} shows that group (un)fairness is upper-bounded by individual (un)fairness for some classes of distance functions. In Figure~\ref{fig:indiv_group}, we experimentally validate this by computing group and individual unfairness for the $W_1$ divergence measure.

\begin{figure}[ht!]
    \centering
    \includegraphics[width=0.6\linewidth]{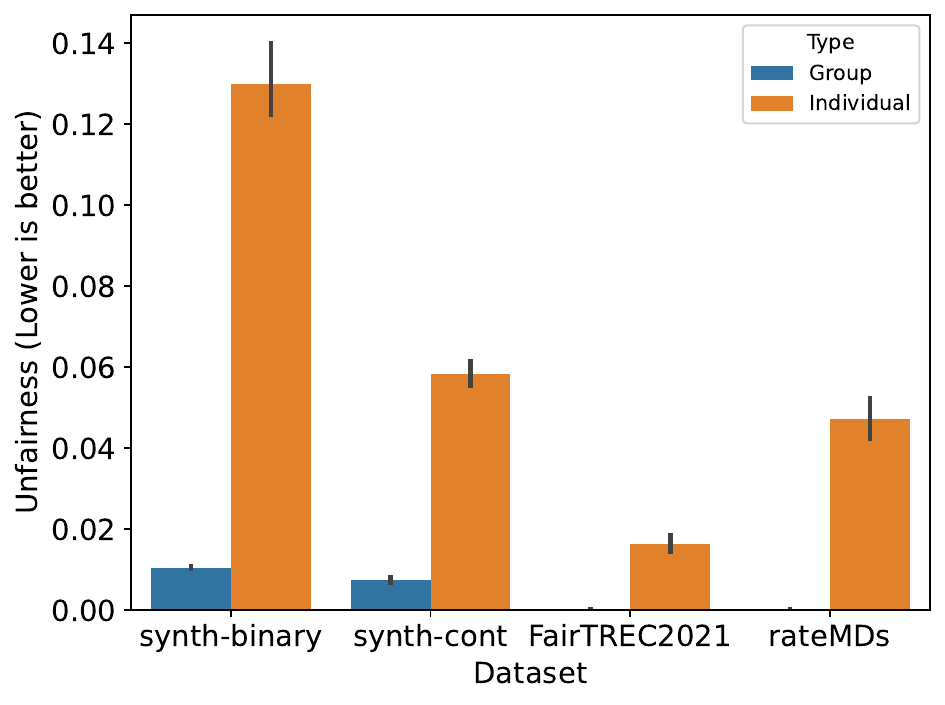}
    \caption{Individual Fairness Bounds Group Fairness under DistFaiR (here, $DistFaiR(W_1)$)}
    \label{fig:indiv_group}
\end{figure}

\section{Note on Query Polarity Aware Ranking}
Note that we can also utilize normalized versions of query polarity, e.g., with softmax-based normalization.  However, normalization in this manner ensures may convert queries with negative polarity to positive -- thus we use unnormalized scores in our experiments.

\section{Additional Baseline: FIGR}

We show performance of FIGR in Table~\ref{tab:figr}. Due to the difference in optimization objective, and the fact that FIGR optimizes fairness per ranking (i.e., proportion-based group and individual unfairness per ranking), we find that this baseline underperforms DistFaiR as well as other exposure-based fairness interventions. We also rely on the official open-sourced implementation of the algorithm for experiments, and only consider binary groups as in the paper (results with more groups showed similar trends). We consider an audit interval length of $k=10$, and observe similar results for different $k$.\looseness=-1

\section{Fairness Over Time}
We observe that variance in online fairness is lower for divergence measures that use higher order moments in Figure~\ref{fig:online_fairness}, on the \texttt{rateMDs} dataset. Verifying this across more divergence measures is thus an interesting direction of future work.
\begin{figure}[htb!]
    \centering
    \includegraphics[width=0.6\linewidth]{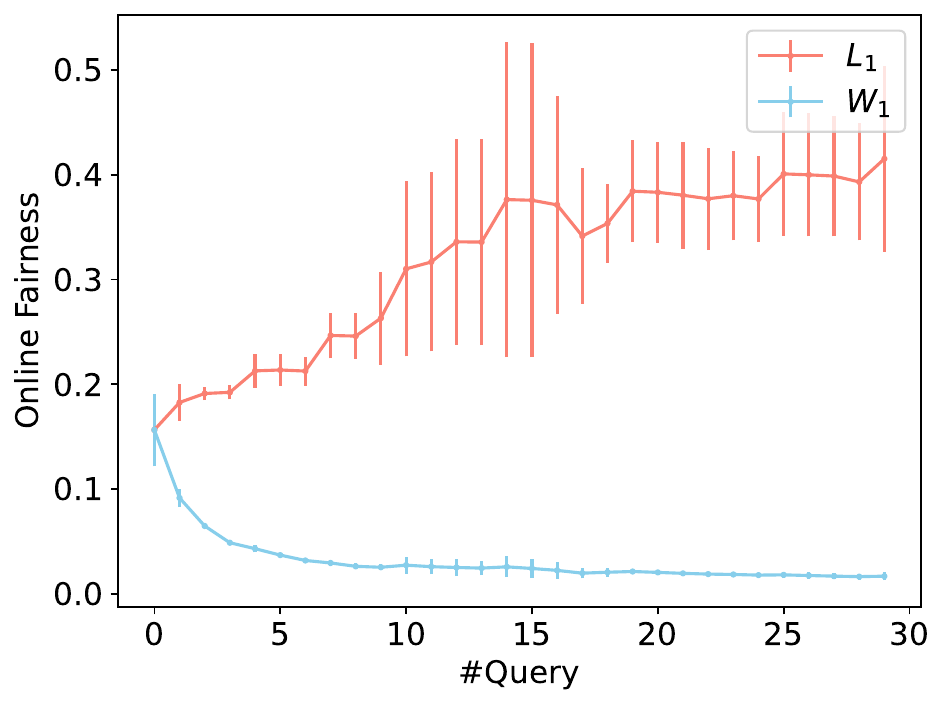}
    \caption{Online fair ranking fairness on \texttt{rateMDs}.}
\label{fig:online_fairness}
\end{figure}

\section{Additional Results: Impact of Re-ranking on Group Fairness}

Re-ranking interventions optimized to improve individual fairness tend to improve or retain EUR as seen in Table~\ref{tab:group-re-ranking} for atleast one divergence measure on three of four datasets. Importantly, DistFaiR underperforms IAA on the IAA individual fairness measurement which makes sense because DistFaiR focuses on worst-case distance between individuals, while IAA focuses on average across individuals. Thus, there are tradeoffs between average and worst-case performance as seen in other fairness contexts~\cite{yang2023change}. Note that we set degree of permissible performance loss (i.e., least possible nDCG) to 80\%, which all methods exceed.

\begin{table}[htb!]
\centering
\caption{\textbf{DistFaiR also improve IAA and EUR in a majority of cases (here, positive, higher is better). However, there are some distance and dataset-dependent variations.}  We show change in fairness when compared to the unconstrained ranking. IAA baseline outperforms DistFaiR on the IAA fairness measurement}\label{tab:group-re-ranking}
\adjustbox{max width=0.7\linewidth}{%

\begin{tabular}{lccccc}
\toprule
Dataset & Baseline & \multicolumn{2}{c}{Fairness} & nDCG@10 \\
\cmidrule(r){3-4}\\
 & & IAA & EUR &  \\
\midrule

\multirow{4}{*}{\texttt{synth-binary}} 
& IAA & \textbf{68.88}\%  & 8.84\% & 100\%\\
 & FoE &  13.31\% & 20.81\% & 100\% \\
 & DistFaiR($L_1$) &  58.32\% & 47.05\%& 100\% \\

 & DistFaiR($L_2^{var}$) &  38.99\% & 	59.05\% & 100\%\\
 & DistFaiR($W_1$) & 43.39\% & \textbf{76.10}\% & 100\%  \\
 
\midrule

\multirow{4}{*}{\texttt{synth-cont}} 
& IAA & \textbf{45.37}\% & \textbf{38.52\%} & 92\%\\
  & FoE &  1.26\% & -139.41\% & 98\% \\
  & DistFaiR($L_1$) &  27.96\% & -36.10\% & 88\% \\

 & DistFaiR($L_2^{var}$) &  34.39\% & 	-39.66\% & 87\%\\
 & DistFaiR($W_1$) & 0.47\% & -125.17\% & 86\%  \\

\midrule

\multirow{5}{*}{\texttt{FairTREC2021}} 
& IAA & \textbf{7.55}\% & 41.50\%  & 100\%\\
& FoE & 4.40\% & 41.07\%  & 100\%\\

  & DistFaiR($L_1$) & 6.12\%  & 45.54\%  & 100\%\\

 & DistFaiR($L_2^{var}$) &  4.45\% & 	43.40\%  & 100\%\\
 & DistFaiR($W_1$) & 6.07\%  & \textbf{46.57\%}  & 100\%\\

\midrule
\multirow{4}{*}{\texttt{rateMDs}} 
& IAA & \textbf{5.67}\% & -7.94\% &  89\% \\
 & FoE & -0.32\% & 26.96\% & 97\% \\

 & DistFaiR($L_1$) &  -0.41\% & \textbf{62.19\%} &  85\%  \\
 & DistFaiR($L_2^{var}$) &  -0.60\% & 39.99\%&  84\% \\
 & DistFaiR($W_1$) &  -1.68\% & 57.41\%&  84\% \\

\bottomrule
\end{tabular}
}
\end{table}

\begin{figure}
    \centering
    \includegraphics[width=0.6\linewidth]{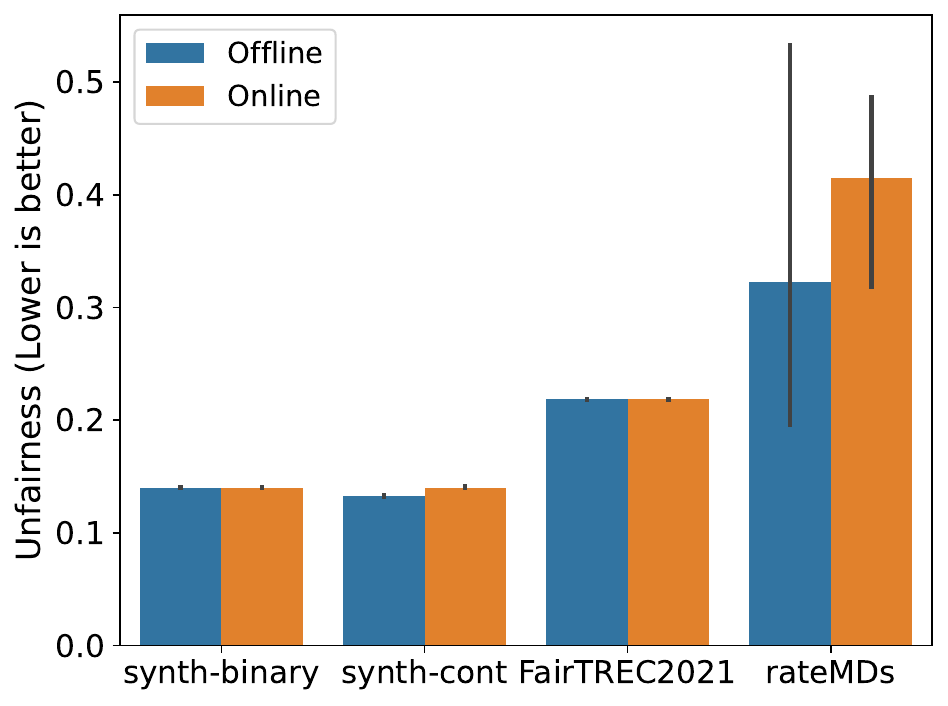}
    \caption{Online fair ranking -- where queries arrive one after the other for ranking -- underperforms or performs similarly as offline fair ranking where the whole set of queries is known apriori. However, the degree of difference is small. The $L_1$ distance function is used for this figure.}
    \label{fig:online_offline_optimization}

\end{figure}

\section{Performance Trade-offs with Query Polarity}
Similar to Figure~\ref{fig:varying_theta}, we observe both performance-fairness trade-offs, and the impact of polarity on fairness optimization on the \texttt{rateMDs} (see Figure~\ref{fig:tradeoffs_ratemd}) dataset. 
\begin{figure}
    \centering
    \includegraphics[width=0.6\linewidth]{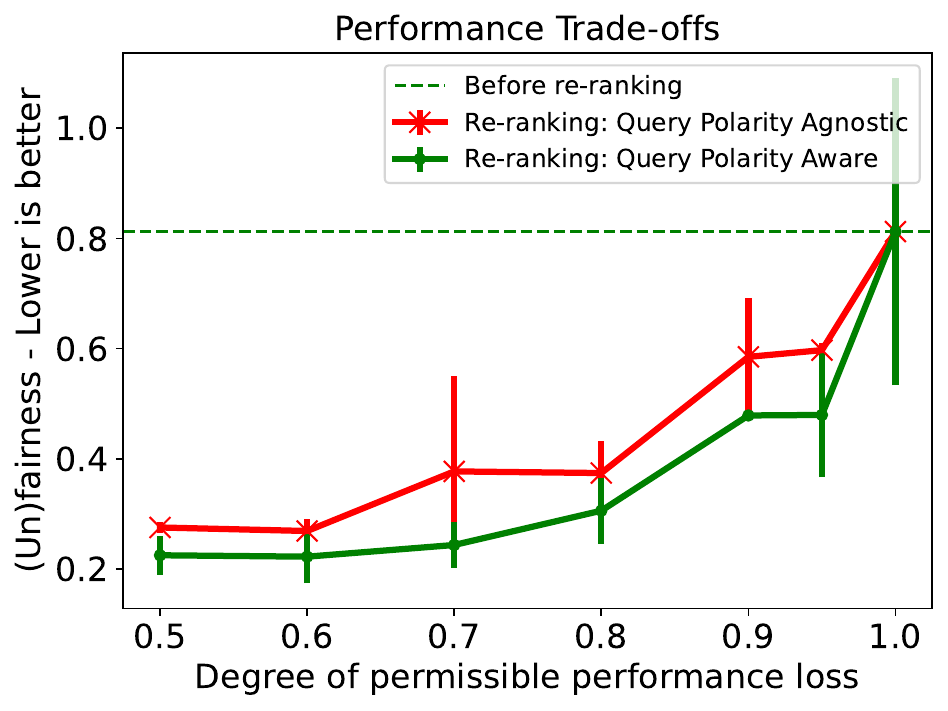}
    \caption{Fairness over different choices of $\theta$ for \texttt{rateMDs}.}
\label{fig:tradeoffs_ratemd}

\end{figure}

\section{Online vs Offline Optimization}
We observe that online optimization underperforms or performs similarly fully offline fairness optimization on three of the four datasets (see Table~\ref{fig:online_offline_optimization}).

\section{Multiple properties per query}
\label{sec:multiple_properties}

\begin{table}
\caption{Impact of fair ranking with a vector of query properties}\label{tab:vector}
\adjustbox{max width=\linewidth}{
\begin{tabular}{llrr}
\toprule
 &  & Pre-intervention & Post-intervention \\
Dataset & Measure &  &  \\
\midrule
\multirow[t]{4}{*}{\texttt{synth-binary}} & IAA & 27.75 & 13.78\\
 & DistFaiR($L_1$) & 2.21 & 0.60 \\
\cline{1-4}
\multirow[t]{4}{*}{\texttt{synth-cont}} & IAA & 17.82 & 13.26 \\
 & DistFaiR($L_1$) & 0.92 & 0.59 \\
\bottomrule
\end{tabular}}
\end{table}
We empirically conduct experiments where each query in the synthetic dataset contains three total properties. We define fairness as the sum of fairness metrics with each component considered separately.

We observe that online optimization reduces unfairness from across metrics in the \texttt{synth-binary} and \texttt{synth-cont} datasets in Table~\ref{tab:vector}.

\end{document}